\newtheorem{theorem}{Theorem}
\newtheorem{lemma}[theorem]{Lemma}
\newtheorem{corollary}{Corollary}
\newtheorem{mydef}{Definition}
\begin{document}

\title{\huge Towards Constructing Finer then Homotopy Path Classes}

\author{\vspace{0.5in}\\\textbf{Weifu Wang} \  and \ \textbf{Ping Li} \\\\
Cognitive Computing Lab\\
Baidu Research\\
%No. 10 Xibeiwang East Road, Beijing 100193, China \\
10900 NE 8th St. Bellevue, WA 98004, USA\\\\
  \texttt{\{harrison.wfw,  pingli98\}@gmail.com}
}

\date{\vspace{0.5in}}
\maketitle

\begin{abstract}\vspace{0.3in}

\noindent This work presents a new path classification criterion to distinguish paths geometrically and topologically from the workspace, which is divided through cell decomposition, generating a medial-axis-like skeleton structure. We use this information as well as the {\em topology of the robot} to {\em bound} and {\em classify} different paths in the configuration space. We show that the path class found by the proposed method is equivalent to and finer than the path class defined by the homotopy of paths. The proposed path classes are easy to compute, compare, and can be used for various planning purposes. The classification builds heavily upon the topology of the robot and the geometry of the workspace, leading to an alternative fiber-bundle-based description of the configuration space. We introduce a planning framework to overcome obstacles and narrow passages using the proposed path classification method and the resulting path classes. 
\end{abstract}

\newpage

\section{Introduction}

Let us consider the following scenario: a box in the middle of the room, what are the different ways to pass the box and reach the other side of the room? We can pass the box from the left or the right; if the box is not very tall, we can also jump over the box. We recognize that these are different paths, but it is difficult to define their differences topologically. If we are not allowed to jump over the box, i.e., viewing navigating the room as a planar problem, the two paths passing the box from left or right belong to two different homotopy classes. In other words, one cannot find a way to continuously {\em deform} the path from one to the other without passing through the box. Path classes are helpful when planning for multiple robots or finding diverse paths to increase the chance of survival if different paths fail with various probabilities. 

In the original 3D problem, because there is a possibility of jumping over the not-so-tall box, the paths passing through the left and right belong to the same homotopy class. In recent years, topological tools such as homotopy and homology were introduced to categorize paths~\citep{bhattacharya2013invariants}, but such definition becomes {\em weak} in dimensions beyond two unless the obstacles create a {\em topological hole} in the space. In our toy example, the box has to be touching the roof to create such a {\em topological hole}. Another example is shown in Figure~\ref{fig:env_diff_paths}, where there are two pillars with two platforms attached to the obstacles. Five colored paths are geometrically distanced, but only one (the bright blue) path belongs to a different homotopy class if the tall pillar touches the roof. One of the primary purposes of this work is to mathematically distinguish the above five paths and show the derived properties extending to the configuration space.

 \begin{figure}[h]
 \vspace{0.2in}
    \centering
    \includegraphics[width=3in]{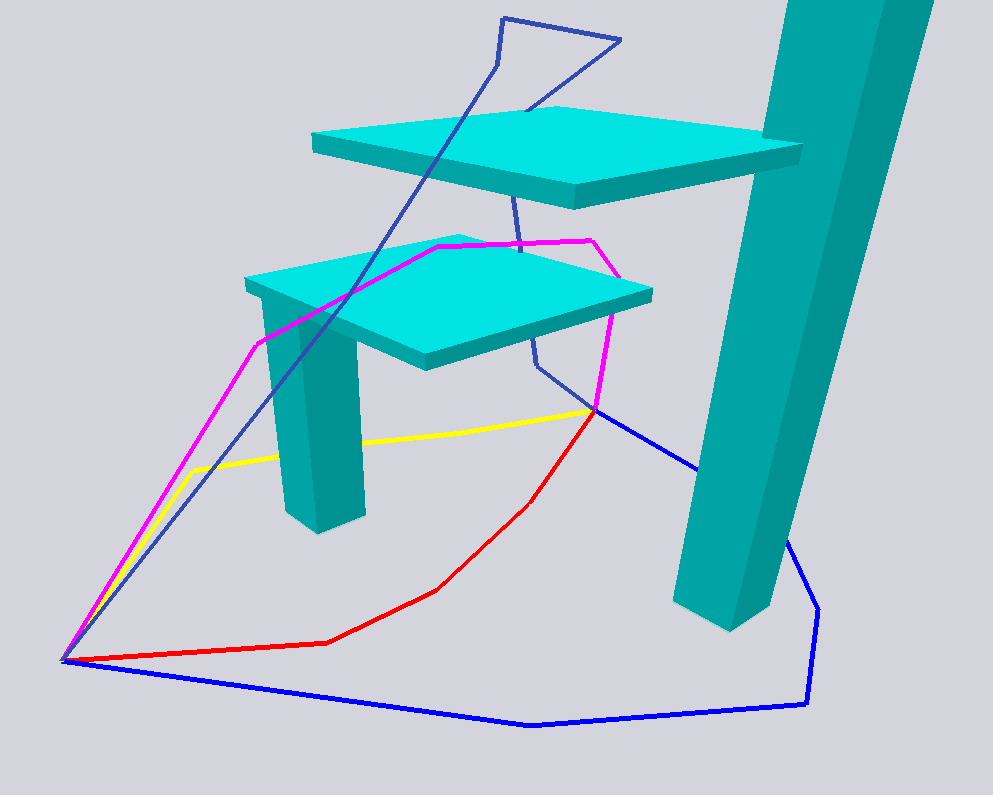}
    \caption{A set of paths that belong to the same homotopy class except the bight blue path closest to the viewpoint. }
    \label{fig:env_diff_paths}
 \end{figure}

% approaches

We propose a new path categorization criterion, so that different path classes can either mean topologically or geometrically different. The classification starts with analyzing the workspace rather than the configuration space. We show that many critical workspace properties, including homotopy information, can extend to the configuration space. A workspace cell-decomposition is performed to find the almost-Voronoi cells-complement, which contains the skeleton or medial-axis \citep{xu1992motion} of the workspace. We show that though it starts in the workspace, the resulting path class properties hold in the configuration space. 

\newpage

We should mention that, although some toy examples and analysis will be presented in $\mathbb{R}^2$, the method is most beneficial in 3D. Unlike the methods that conduct configuration space analysis and use sampling-based methods, our proposed path categorization standard only depends on the topological and geometrical properties of the workspace and robot itself, which are both easy to analyze and visualize. We show that the proposed criterion can distinguish configuration space paths with different workspace properties, and the resulting path classes can be used for various planning tasks, like obtaining path existence~\citep{mccarthy2012proving}, finding diverse paths~\citep{lyu2016k}, and planning for multiple agents.

% Using path classes to build a new path planning framework. 
A new planning framework can be built upon the proposed path classes and associated methods. The framework aims to separate topological and geometrical planning into two stages. In the workspace, the path classes found can represent the connectivity information in both the workspace and the configuration space. We can integrate the geometrical constraints into the topological paths and generate satisfying geometrical paths using the topological paths found using the path classes. In this work, we focus on the theoretical foundations of path classification and discuss the geometrical interpolation step.

% Below are too detailed. Now discuss the differences and advantages.

The proposed path classification starts from the workspace. The first step is to distinguish the free regions from the obstacles. Since we know the geometries of the obstacles in the workspace, we can use cell-decomposition as the first step of the processing. We adapt a Delaunay-triangulation-based cell decomposition, producing regions with properties similar to $\alpha$-shapes (also written as alpha-shapes)~\citep{kreveld2011shape,mccarthy2012proving}. The primary focus of this work is that we show the cell decomposition in the workspace, plus the {\em topology of the robot} information, can correctly {\em approximate} the simplicial complex in the configuration space. 
%, having the same type of path connectivity.

Cell-decomposition is a well-studied technique in the planning community, and many variations have been used in the past. Axis-aligned cell decomposition is simple, but the curse of dimensionality arises when conducted in the configuration space. Alpha-shape and simplex have also been used in planning~\citep{mccarthy2012proving}, but the method usually requires some dense sampling in the configuration space. Medial-axis approaches often employ cell-decomposition methods as well. 

The topology of the robot is the abstraction of the possible geometries a robot can reach and can usually be described using a collection of chains. For example, an open-chain has a different topology compared to a closed chain, but two open chains with different link lengths share the same topology. Paths within the same proposed path class have good geometric properties for a serial chain. The paths for a collection of chains can inform a path for the complex robot. We prove that the proposed categorization generates the same topology path classes when there are topological holes but can also further classify additional path classes based on the geometrical information when topological holes do not exist. Our path classification is {\em finer} than the homotopy class. 

As presented in this paper, we are only considering polyhedron obstacles. While the proposed algorithm can handle concave obstacles, the decomposition can be expensive when there are many closely placed concave obstacles. This paper is primarily theoretical, with proof-of-concept examples and experiments. Another limitation of the current work is that the topological holes within the workspace obstacles are not well analyzed. We simply categorize such holes as regions adjacent to a single obstacle. 

\newpage

\textbf{Our main contributions} in this work are summarized as follows:
\begin{enumerate}
    \item We introduce a {\em finer} path classification method compared to pure topological-based tools; our method can further identify paths that are not distinguishable topologically but are still {\em distant} from each other; 
    \item The path class provides high-level topological paths; we present definitions, proofs, and algorithms to construct the topological paths; 
    \item  We introduce a planning framework based on the proposed path classes, which can provide good kinematic path candidates within tight spaces; 
    \item We show that the proposed methods are robust to small perturbations of the obstacles and can make updates efficiently; 
    % In the future, add the framework to the probabilistic robotics domain
\end{enumerate}

% HERE, to add
% Alpha shape has one major flaw: it does not deal with obstacles with resolutions well and requires a hyper-parameter. So, a more straightforward approach would work for the proposed method. 
% experiments show a few classes of paths and their corresponding configuration space example. 
% if possible, show difference or runtime difference to the alpha shape

\section{Related Work}

Researchers have been using homotopy class to categorize paths on the plane and in space~\citep{kim2012trajectory,bhattacharya2012search}. In 2D, the homotopy class~\citep{hatcher2022algebraic} of a path perfectly categorizes the paths that wrap around the planar obstacles differently. Based on such topological tools, works on how to search paths~\citep{bhattacharya2012search} and find manipulation approaches~\citep{bhattacharya2015topological} using homotopy classes were proposed. More recent work has studied how to extend the ideas into 3D around knot-like obstacles and attempts to combine the path class constraints with path optimality constraints~\citep{bhattacharya2018path}. The shortest path is one of the ultimate goals for motion planning, but only limited cases have been solved, especially when the obstacles are involved. In~\citet{grigorie1998polytime} and~\citet{hershberger1994computing}, the authors studied how to find the shortest paths in a given homotopy~class. In~\citet{pokorny2016topological}, the authors proposed to use sampling-based methods to compute filtrations and persistent homology to classify paths. 

In this and many other existing works, the authors attempt to distinguish paths by measuring the persistent homology of obstacles~\citep{pokorny2016topological} or finding homotopy classes~\citep{bhattacharya2015topological}. However, in many cases, especially in higher dimensions, the topological holes are hard to create or compute. For example, in Figure~\ref{fig:env_diff_paths}, the two pillars and the connected platforms do not create a topological hole, all paths are homotopy equivalent, and the persistent homology of the obstacles is of low dimensions. The persistent homology and associated measure may not distinguish the path classes fully. 

At the same time, researchers have also attempted to define path diversities beyond the topological properties. In~\citet{branicky2008path}, the authors introduced path diversity by decomposing space into cells and used the cell information to classify paths. Path diversity definitions also exist based on the measurement of closeness and relevance to the goal~\citep{knepper2009path,knepper2011thesis,knepper2012toward}. Researchers have also introduced probability into the definitions of path diversity. Some researchers have defined {\em survivability} as a critical concept in the path diversity, measuring how likely paths are to succeed if some probabilistic events were to happen at some locations~\citep{erickson2009survivability,lyu2016k}. In addition, researchers have studied path diversity in various realistic settings, such as ambush avoidance~\citep{boidot2015optimal}, and detection avoidance for aircraft~\citep{zabarankin2006aircraft}, and how to navigate through mined waters~\citep{babel2015planning}. In~\citet{schwartz1983piano}, the authors presented a semi-algebraic description to distinguish various obstacles and can be used to define path classes. However, the sets can become increasingly difficult to compute when the obstacles become complex. In~\citet{simeon2004manipulation}, the authors also proposed to use sampling-based methods and reachable regions to perform motion planning, which is  similar to the method proposed in this work. One of the main differences is that we are not using sampling-based approaches in our proposed method, only the obstacle descriptions and robot structure. 

In networking, the study of path diversity has also attracted attention, as the network communication and information flow needs to be diverse to handle traffic conditions~\citep{yuan2005minimum}. Heuristic measurements were introduced to find diverse paths~\citep{voss2015heuristic} and compute the maximum diversity of paths~\citep{marti2013heuristics}. Similar to the graph coloring problem, researchers also attempted to color different paths to define their diversity~\citep{chan2014multi}. Geometric methods such as visibility have also been used to generate diverse paths~\citep{quispe2013generation}, or cover spaces like art gallery problems~\citep{orourke1987art}. Theoreticians have also attempted to analyze how to find disjoint paths and the related complexity~\citep{xu2006complexity}. Path diversity has also been studied under pursuit and evasion~\citep{chung2011search}. 

Multi-robot planning is one of the most common scenarios associated with path diversity in robotics. Multi-robot coordination usually needs diverse paths to succeed~\citep{parker2009multiple,pecora2018loosely}. Multi-robot planning has also been studied on graphs~\citep{pereyra2017path}. In more challenging scenarios, people have studied the path diversity problem in online settings, such as diversity maintenance problem~\citep{curkovic2020diversity}, control problems~\citep{liu2019homotopy}, and online planning and trajectory smoothing and optimization~\citep{zhu2015convex}.

\section{Decomposition and Robot Topology}

We introduce the foundations of the path classification: workspace decomposition and robot topology representation. The following section shows the product relation between workspace cells and the robot topology that can successfully reproduce the path classes in the configuration space. We decompose the workspace based on the Voronoi boundaries' dimension and use Delaunay triangulation as the computation basis. 

% Change from  hull -> joint-hull
\begin{mydef}
Let there be two disjoint objects $\mathcal{O}_i$ and $\mathcal{O}_j$, denote their convex hull as $CH(\mathcal{O}_{i, j})$. Define a joint-hull $JH(\mathcal{O}_{i, j}) = X_{i, j}$ as a subset of $CH(\mathcal{O}_{i, j})$ so that $X_{i, j}\cap X_{s, t} = \emptyset$ for $i\neq j\neq s\neq t$, and $X_{i, j}\cap X_{i, k} = \mathcal{O}_i$ for $i\neq j, i\neq k, j\neq k$. Define the joint cover of a given workspace $W$ with obstacles $\mathcal{O}_i, 0 < i \leq N$ as the union of all joint hulls for all pairs of objects / obstacles in a given space, $JC(W) = \cup{X_{i, j}}, \forall i, j\in N, i\neq j$. 
\label{def:jhjc}
\end{mydef}

In other words, a joint hull between two obstacles $\mathcal{O}_i$ and $\mathcal{O}_j$ contains Voronoi cell boundaries that have equal distances to $\mathcal{O}_i$ and $\mathcal{O}_j$. We classify free regions based on the number of adjacent obstacles. The degree of Voronoi cell boundaries encodes the degree of holes the corresponding region would create when removed from the workspace. 

% Relation to alpha shapes
The $\alpha$-shape is a one-parameter family of polytopes that quantity the {\em shape} of a point set (parameterized by $\alpha\in\mathbb{R}^+$). When $\alpha = 0$, the alpha shape of $\mathcal{S}$ is simply $\mathcal{S}$ itself. When $\alpha = \infty$, we have that the alpha shape of S is equivalent to the convex hull of $\mathcal{S}$. For intermediate $\alpha$, the $\alpha$-shape is more complicated but describes the shape of $\mathcal{S}$ in various levels of detail. The $\alpha$-shapes provide a natural interpretation for a union of balls and are closely related to {\u{C}}ech complex~\citep{ghrist2014elementary}. 

Using $\alpha$ shapes to describe the obstacles is an excellent way to describe some objects' convex and concave regions at some resolution. Finding the $\alpha$-shape description for the obstacles is usually preferred. In~\citet{mccarthy2012proving}, the authors used sampling-based methods in the configuration space to build $\alpha$-shapes and approximate the configuration obstacles, then use the resulting information to predict the existence of paths.

The joint hull of two obstacles is an $\alpha$-shape and is a subset of the triangulation between the two obstacles in the workspace. The description may not correspond to a fixed $\alpha$. So, in other words, the joint hull is a weighted $\alpha$-shape~\citep{edelsbrunner1992weighted}. It is challenging to select appropriate weights to describe the desired details of the geometry. The known runtime for finding the weighed $\alpha$-shapes is $O(n^2)$ in $\mathbb{R}^3$ for a given weight. Because the weight is hard to select, we can find the joint hull another way, based on triangulation. We can avoid directly finding the weighted-$\alpha$-shape because there can be many valid joint hulls for $\mathcal{O}_i$ and $\mathcal{O}_j$. 

We use Delaunay triangulation to construct a joint cover. Based on the Definition~\ref{def:jhjc}, there can be more than one valid joint hull between two objects. We would like to find the Joint Cover that maximizes the region adjacent to two obstacles.

Let us introduce Algorithm~\ref{algorithm:joint-cover} to construct the joint cover. The process builds upon the Delaunay Triangulation and then merges adjacent regions based on the number of adjacent $\mathcal{O}$. The algorithm's runtime is worst-case $O(n^2)$ for a total of $n$ vertices among all obstacles, which is the worst-case bound for triangulation. However, just as the triangulation and $\alpha$-shape constructions, the practical runtime is usually much less than the theoretical worst case. The construction can also be incremental, and runtime can be faster if the obstacles and vertices are sorted geometrically first. Figure~\ref{fig:env_box_fig} shows some Joint Hulls among random 2D polygon obstacles. Figure~\ref{fig:env-3d} shows the joint hulls in 3D for an environment with multiple obstacles.

\begin{figure}[h]
\begin{center}
\mbox{
    \includegraphics[width=2in]{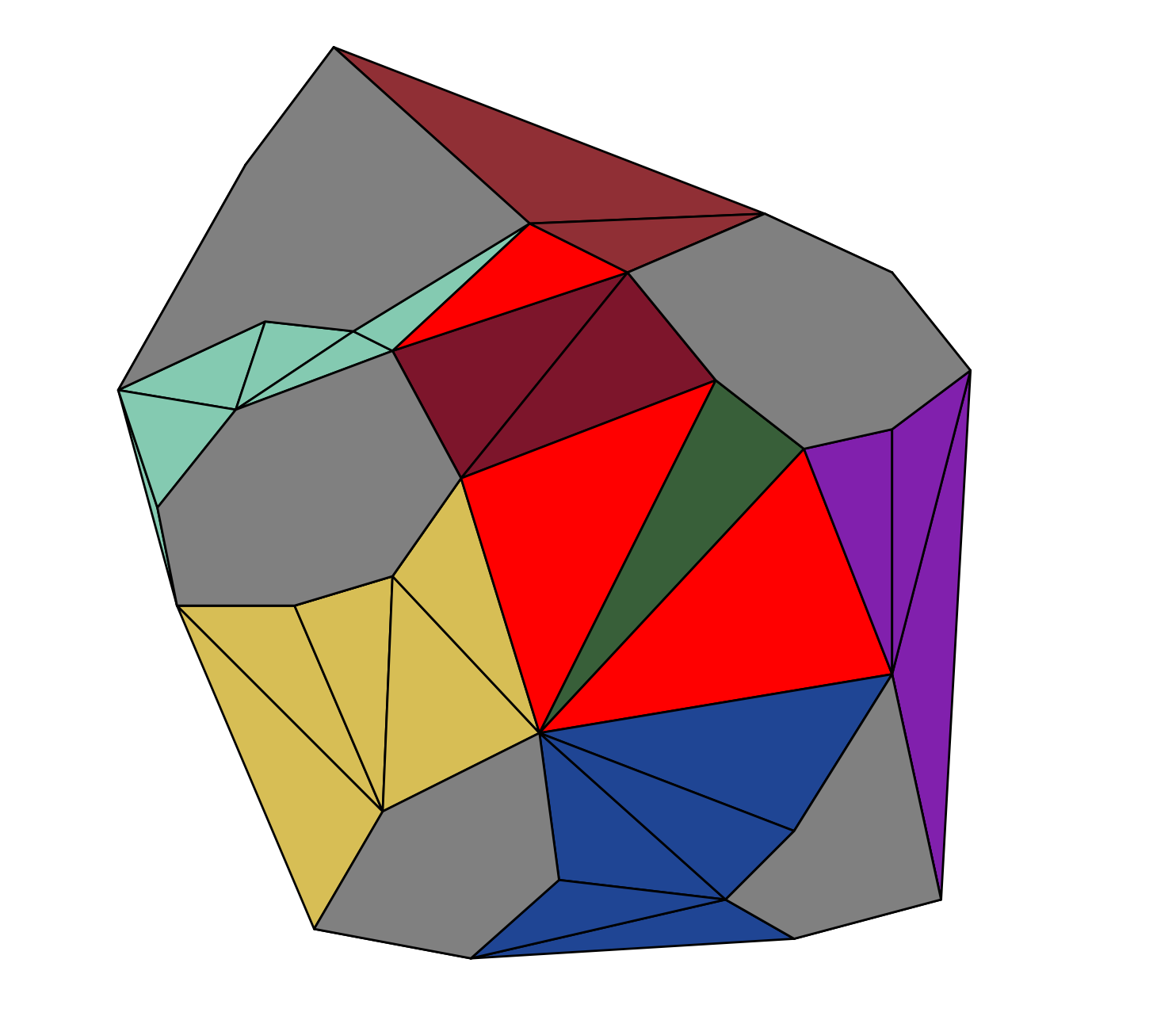}
    \includegraphics[width=1.5in]{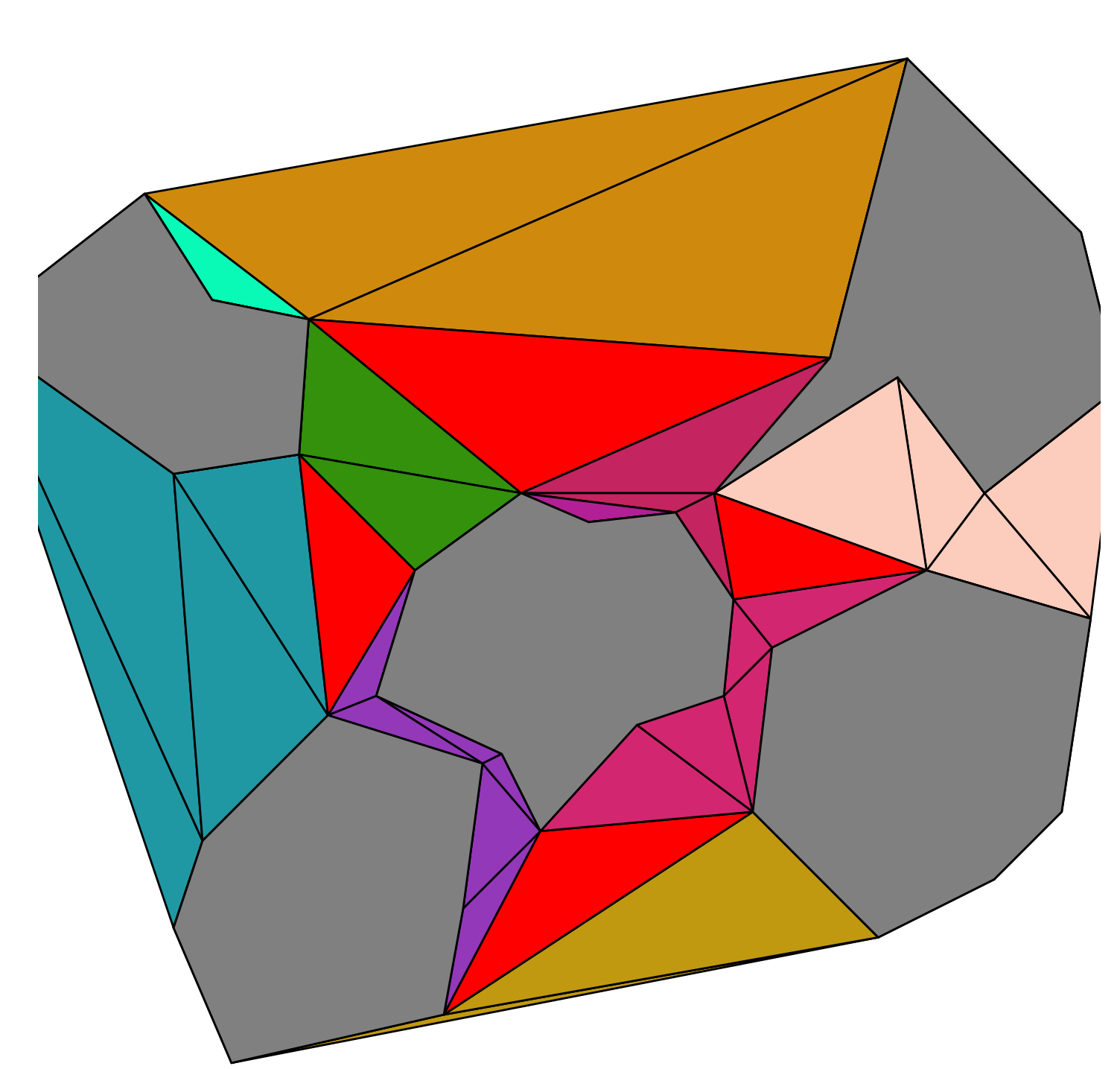}
}
\vspace{-0.1in}

\caption{Random gray polygons and corresponding Joint Hulls obtained using Algorithm~\ref{algorithm:joint-cover}.}
\label{fig:env_box_fig}\vspace{-0.1in}
\end{center}
\end{figure}

\begin{figure}[h]
\centering 
\subfigure[3D Environment]{\includegraphics[height=2in]{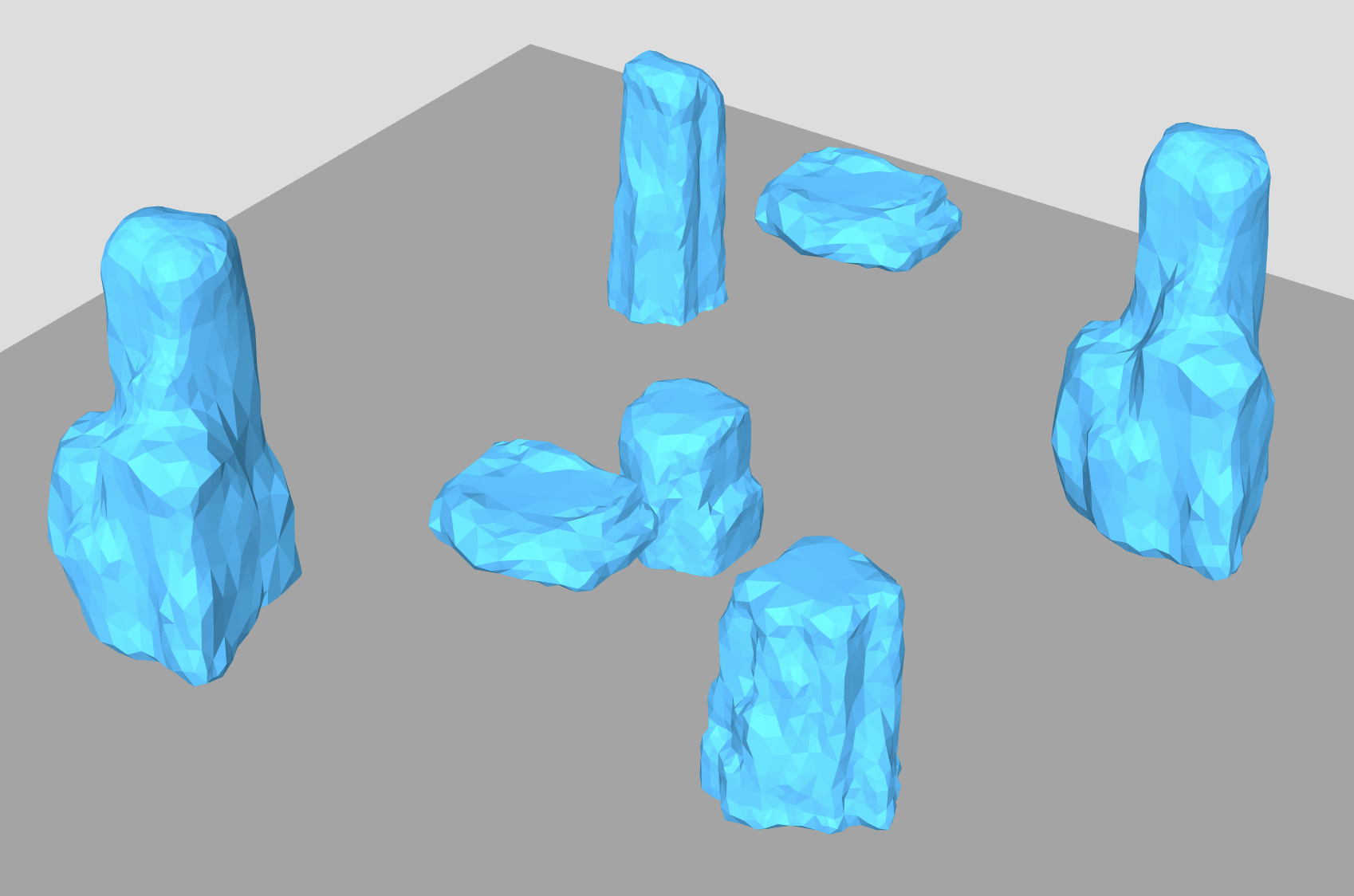}}
\subfigure[Joint hulls]{\includegraphics[height=2in]{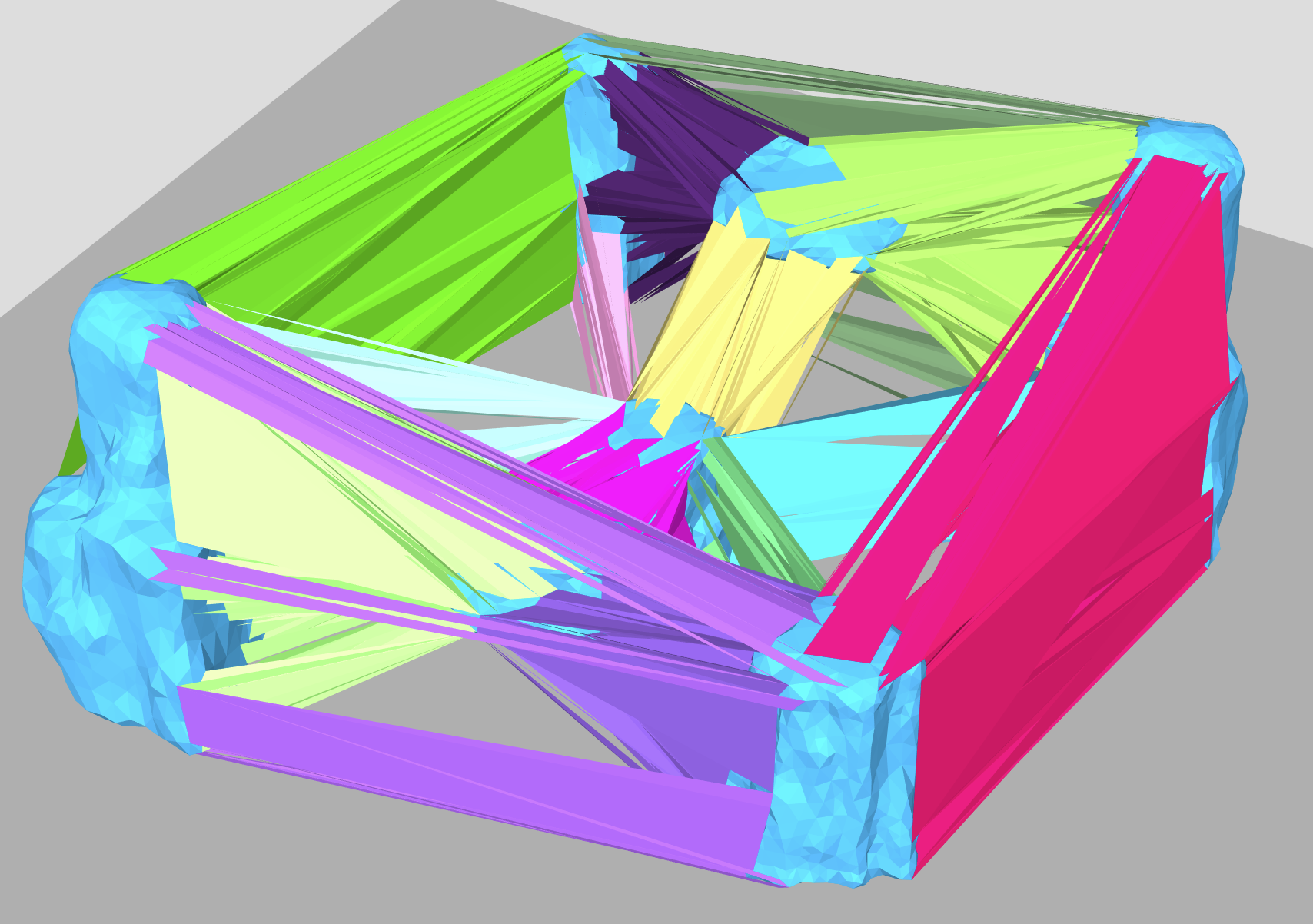}}

\vspace{-0.1in}

\caption{A 3D environment and boundaries of the joint hulls between adjacent obstacles. In the figure on the write, different colored areas represent regions adjacent to two obstacles, and the spaces among the colored regions are the common spaces that are adjacent to multiple objects. }
\label{fig:env-3d}\vspace{-0.4in}
\end{figure}

\newpage

\begin{algorithm}[t]
\SetAlgoLined
\SetKwInOut{Input}{input}\SetKwInOut{Output}{output}
\Input{$\mathcal{O}_i, i=1,\ldots, N$, $\mathcal{W}\in\mathbb{R}^d$}
\Output{$JC(\mathcal{W})$, $G_{\mathcal{A}}$}
$DT\leftarrow$ Delaunay triangulation of obstacles\;
Initialize $G_{\mathcal{A}} = (V, E)$ where $V = \mathcal{O}_{\cdot}$ and $E = \emptyset$\;
% $\mathcal{J}_{i, j}\leftarrow\emptyset, \forall\mathcal{O}_i, \mathcal{O}_j$\; 
\While{DT is not empty} {
$s_i\leftarrow $ pop triangle from DT\;
$L\leftarrow$ the $\mathcal{O}$ $s_i$ is adjacent to\;
$\mathcal{N}\leftarrow $ neighboring triangles of $s_i$\;
\While{$\mathcal{N}\neq\emptyset$} {
    $s_k\leftarrow$ pop $\mathcal{N}$\;
    \If {current region has the same adjacent $\mathcal{O}$ as $s_k$}{
        Add neighbors of $s_k$ into $\mathcal{N}$\;
        Remove $s_k$ from DT\;
        Merge $s_k$ into current region\;
        Update current region neighboring $\mathcal{O}$ set\;
    } 
}
Set the merged region as $\mathcal{J}(\cdot)$ based on the adjacent $\mathcal{O}_\cdot$\;
}
\Return $JC\leftarrow \cup \mathcal{J}(\cdot)$, $G_\mathcal{A}$
 \caption{Joint Cover}
 \label{algorithm:joint-cover}
\end{algorithm}

The adjacency is closely related to visibility. If two obstacles are mutually visible, a free region must exist adjacent to both obstacles. Nevertheless, this free region may also be adjacent to other obstacles. We can use visibility to construct $G_\mathcal{A}$ quickly. Visibility between obstacles is hard to compute directly. The triangulation and merging process can find $G_\mathcal{A}$ as well as $JC$ simultaneously. 

% Robot topology
The configuration space is closely tied to the topology of the robot. A robot $\mathcal{B}$ can deform into different geometries and sometimes even different topologies. Embedding various robot deformations into the workspace will result in different configurations. Let us select a set of {\em key points} on the robot, including but not limited to the endpoints of each joint on the robot. We can show that embedding the {\em key points} into the workspace can fully represent the embedding of robot deformations and infer the associated configuration.

% points or chain? 

Let us select $k$ {\em key points} ($p_i = (x_i, y_i, z_i)$, $i\in\{1, 2, \ldots, k\}$) on the robot with fixed pair-wise distances between adjacent points. Note, a single key-point $p$ may be adjacent to many key points ($A_p$), and it has a fixed distance to each adjacent key point $q\in A_p$, but two points in $A_p$ may not have a fixed distance constraint. For example, $p$ can be the right shoulder of a humanoid robot. The set $A_p$ includes the spine connecting to the head, the left shoulder, and the right elbow. These points in $A_p$ all have fixed distances to the right shoulder, but they do not need to maintain any fixed distances. The relations among these key points are fixed for a given robot. Here, we do not consider the robots that can {\em reconfigure} themselves.

When the robot is a single point, the planning can be simple, and the cells derived above can give the path classes. When the robots are more complex, the path classes in the configuration space can differ from the workspace cells. We show that we can combine the workspace cells and the topology of the robot information and map to the path classes in the configuration spaces. This way, we can independently analyze the workspace information and robot topology without analyzing or sampling in the high-dimensional and hard-to-describe configuration space. 

\newpage

The configuration space is a simplified representation of the workspace information combined with the robot deformations, so that configuration space $\mathcal{C}$ is homeomorphic to $W\times P(\mathcal{B})$. In fact, $(\mathcal{C}, \pi, B, P(\mathcal{B}))$ is a fiber bundle~\citep{hatcher2022algebraic} if $\pi$ maps a configuration to its workspace embedding region. The configuration space has the subspace topology of $W\times P(\mathcal{B})$. Because the non-trivial mapping of $\pi$ is caused by the obstacles, where the paths are disconnected, a path in $W\times P(\mathcal{B})$ corresponds to a path in $\mathcal{C}$ if it is valid in $\mathcal{C}$. We can define valid path classes on $W\times P(\mathcal{B})$ to classify paths in $\mathcal{C}$. The path in $W$ and the embedding function $\pi$ would be easy to visualize and analyze independently. 

Given $W$ and $JC(W)$, let us label the free regions to represent the workspace connectivity better. Note that a free region is adjacent to two obstacles if it is a compact region inside a joint hull. Other regions can be found using Algorithm~\ref{algorithm:joint-cover}. The free regions and their adjacency form a simplicial complex representation of the workspace, and their adjacency relation $G_{\mathcal{A}}$ symbolizes the workspace connectivity. 
\begin{mydef}
Define the label of a free-region as: 
\begin{itemize}% [leftmargin=*]
\setlength\itemsep{-0.5pt}
    \item If a free-region is adjacent to a single obstacle $\mathcal{O}_i$ and not a hole within obstacle $\mathcal{O}_i$, we assign label $i$ to the region; 
    \item If a free-region is a hole within obstacle $\mathcal{O}_i$, we assign label $-i\times k$ to the region, for the $k$th hole; 
    \item If a free-region is adjacent to $\mathcal{O}_i$ and $\mathcal{O}_j$, assign label $(N*i + j)$ assuming $j < i$; 
    \item If a free-region is adjacent to more than two obstacles, $\{\mathcal{O}_i\}$, $i\in P$, where $P$ is a subset of positive integers no larger than $N$; assign label $f(\cdot)$ to the region, where $f$ is the G\"{o}del numbering function, $f(i_1, i_2, \ldots, i_{|P|}) = 2^{i_1}\cdot 3^{i_2}\cdot 5^{i_3}\ldots p_{|P|}^{i_{|P|}}$, where $p_j$ is the $j$th prime number, $i_1 > i_2 > \ldots > i_{|P|}$; this encoding is unique for each region.
\end{itemize}
\end{mydef}

The structure of the $k$ key points on a given robot also forms a simplicial complex which does not change even when the robot deforms in most cases. Let us denote the workspace free-region complex as $S_{\mathcal{W}}$, and the robot key-points structural complex as $S_{\mathcal{B}}$. The product relation between the two complex $S_\mathcal{W}\times S_\mathcal{B}$ can be used to represent the embedding of the robot deformation into the workspace. As $S_\mathcal{W}$ and $S_\mathcal{B}$ are abstractions of $W$ and $P(\mathcal{B})$, they are homotopy equivalent to $W$ and $P(\mathcal{B})$, respectively. The product $S_\mathcal{W}\times S_\mathcal{B}$ are homotopy equivalent to $W\times P(\mathcal{B})$. We can therefore have the following result, 
% The path classes on $S_\mathcal{W}\times S_\mathcal{B}$ will be at least topologically equivalent to the paths in $\mathcal{C}$. 

\begin{lemma}
Any two paths that are not homotopy equivalent on $\mathcal{C}$ correspond to two different paths on $S_\mathcal{W}\times S_\mathcal{B}$. 
\label{lemma:equivalence}
\end{lemma}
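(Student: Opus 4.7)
The plan is to prove the contrapositive: if two paths on $\mathcal{C}$ correspond to the same path on $S_\mathcal{W} \times S_\mathcal{B}$, they must be homotopy equivalent in $\mathcal{C}$. The correspondence itself is obtained by composing the fiber-bundle inclusion $\iota:\mathcal{C}\hookrightarrow W\times P(\mathcal{B})$ with the product of the homotopy equivalences $f_W:W\to S_\mathcal{W}$ and $f_\mathcal{B}:P(\mathcal{B})\to S_\mathcal{B}$ that are assumed just before the lemma.

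First, I would observe that because $f_W$ and $f_\mathcal{B}$ are homotopy equivalences, the product $f_W\times f_\mathcal{B}$ is a homotopy equivalence $W\times P(\mathcal{B})\to S_\mathcal{W}\times S_\mathcal{B}$ and therefore induces an isomorphism of fundamental groupoids. Hence, two paths in $W\times P(\mathcal{B})$ whose images agree as combinatorial traces through the simplices of $S_\mathcal{W}\times S_\mathcal{B}$ are homotopic in the ambient product. Applied to $\iota\circ\gamma_1$ and $\iota\circ\gamma_2$, this yields an ambient homotopy $H$ in $W\times P(\mathcal{B})$ between any two paths of $\mathcal{C}$ that share the same image in $S_\mathcal{W}\times S_\mathcal{B}$.

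The heart of the argument is then to lift the ambient homotopy $H$ to one that stays inside $\mathcal{C}$. For this I would use that each cell of $S_\mathcal{W}$ is, by construction of the joint cover (Algorithm~\ref{algorithm:joint-cover}), a contractible free region of $W$, and each simplex of $S_\mathcal{B}$ corresponds to a continuously interpolable family of key-point placements of the robot. Consequently, $\iota^{-1}$ of a product simplex of $S_\mathcal{W}\times S_\mathcal{B}$ is an obstacle-free, path-connected subset of $\mathcal{C}$. Chaining within-cell straight-line deformations along the shared combinatorial trace glues into a homotopy $\gamma_1\simeq\gamma_2$ that stays inside $\mathcal{C}$, contradicting the hypothesis and completing the contrapositive.

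The main obstacle is precisely this lifting step: an ambient homotopy in $W\times P(\mathcal{B})$ is only useful if it can be rerouted to avoid obstacles. What makes this possible is local triviality of the bundle $(\mathcal{C},\pi,W,P(\mathcal{B}))$ over each product cell, which in turn relies on two structural facts the earlier sections establish, namely that workspace cells coming from the joint cover are obstacle-free and that the key-point abstraction faithfully encodes the deformability of $\mathcal{B}$. Making these contractibility-of-preimage claims rigorous, and checking that the piecewise homotopies glue continuously across cell boundaries, is where most of the technical work will lie.
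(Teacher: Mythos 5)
Your proposal proves the contrapositive (same combinatorial trace on $S_\mathcal{W}\times S_\mathcal{B}$ implies homotopic in $\mathcal{C}$), whereas the paper argues the stated direction directly: two non-homotopic paths in $\mathcal{C}$ witness a topological hole, that hole is attributed either to a workspace obstacle or to a robot deformation limit, hence it already appears in $W\times P(\mathcal{B})$ and persists under the abstraction to $S_\mathcal{W}\times S_\mathcal{B}$, so the two paths cannot share the same cell sequence. Your route is a nerve-style argument whose decisive ingredient is that the preimage in $\mathcal{C}$ of each product cell, and of the intersections corresponding to consecutive states, is path-connected and contractible, so that two paths with the same trace can be homotoped cell by cell inside $\mathcal{C}$. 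That is precisely the content the paper defers to Lemma~\ref{lemma:contract_config} and exploits again in Lemma~\ref{lemma:rep_equal}, so your dependence on it is consistent with the paper's machinery; if anything, your organization makes explicit the technical burden that the paper's one-line ``holes map to holes'' argument leaves implicit, at the price of needing those contractibility claims up front. One caution: the intermediate step in which you obtain an ambient homotopy $H$ in $W\times P(\mathcal{B})$ from the fundamental-groupoid isomorphism and then propose to ``lift'' it into $\mathcal{C}$ is both unnecessary and, as literally framed, not salvageable --- the inclusion $\mathcal{C}\hookrightarrow W\times P(\mathcal{B})$ is not $\pi_1$-injective in general (already for a point robot with a single convex obstacle the ambient product is simply connected while $\mathcal{C}$ is not, and an ambient homotopy may sweep through collision configurations), so no rerouting argument for an arbitrary $H$ can succeed. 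Your actual construction never uses $H$: it builds the homotopy directly from the shared trace together with the contractibility of cell preimages and the connectedness of their pairwise intersections, and that is the version of the argument you should keep and make rigorous.
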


\begin{proof}
If two paths belong to the same homotopy class in $W\times P(\mathcal{B})$, the corresponding paths must belong to the same homotopy class in $\mathcal{C}$. If there exist two paths belonging to two homotopy classes in $\mathcal{C}$, topological holes exist in $\mathcal{C}$. The hole is either created by workspace obstacle or robot deformation limit. Both will lead to topological holes in $W\times P(\mathcal{B})$. The mapping of the topological hole from $W\times P(\mathcal{B})$ to $S_\mathcal{W}\times S_\mathcal{B}$ also results in a topological hole; thus, the corresponding paths cannot belong to the same simplicial cell.
\end{proof}

On the robot structural simplex, the highest order simplex is the $1$D chains, either open or closed. For example, a humanoid robot can have four chains sharing a common endpoint. These chains can uniquely determine the robot's topology and geometry. It is sufficient to plan for these chains, either independently or simultaneously. 

\newpage

\begin{figure}[t]
    \centering
    \subfigure[Robot simplex]{\includegraphics[width=2.2in]{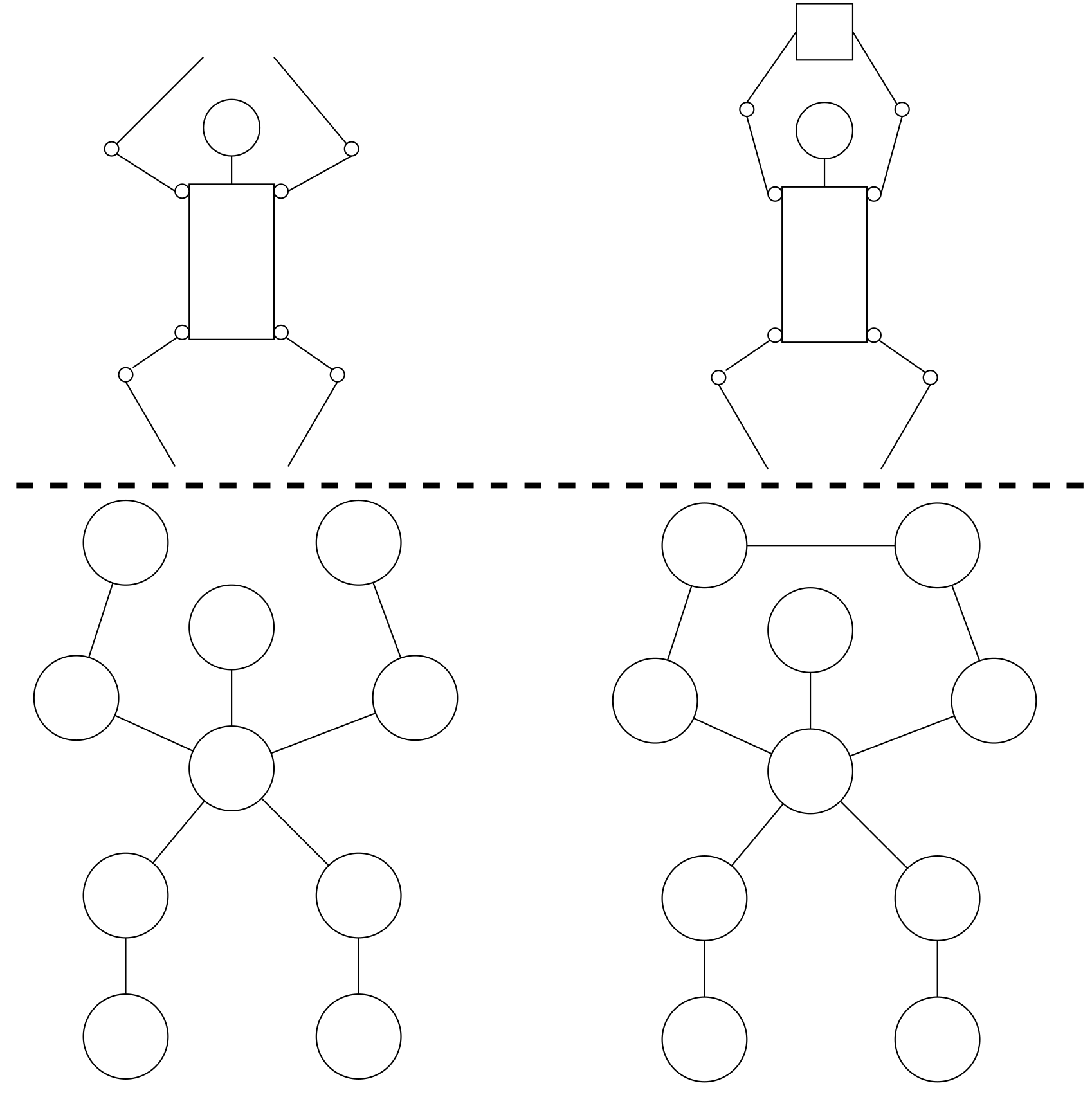}\label{fig:robot-graph}}
    \subfigure[Space simplex]{\includegraphics[width=2.2in]{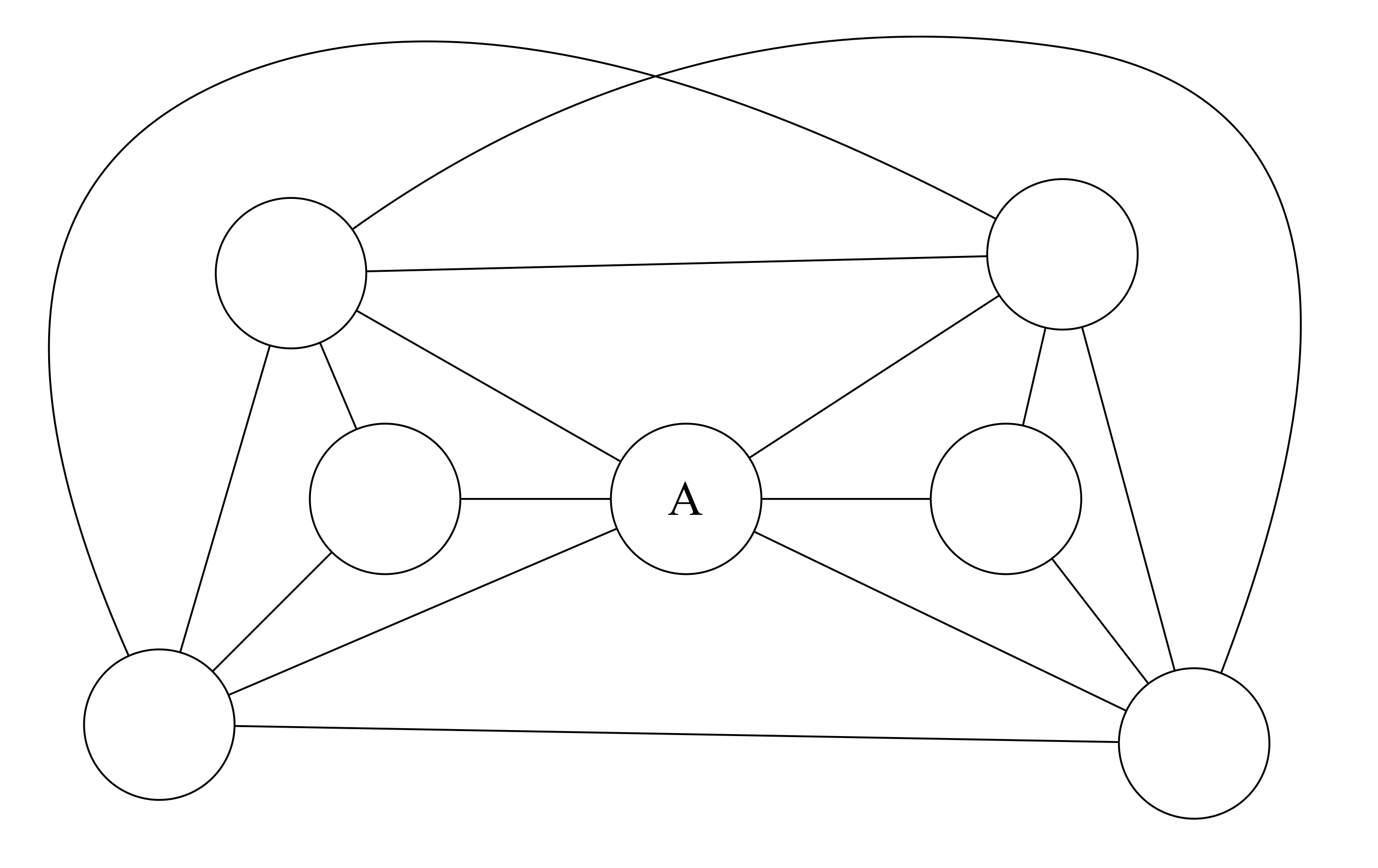}\label{fig:space-graph}}
    \caption{Sample $S_\mathcal{B}$ and $G_\mathcal{A}$. The robot simplex can change based on the current deformation, but the basic structure usually stays the same. The $G_\mathcal{A}$ is for the environment shown in Figure~\ref{fig:env-3d}. }
    \label{fig:simplex}\vspace{0.2in}
\end{figure}

In Figure~\ref{fig:simplex}, we show the $S_\mathcal{B}$ on the left, and $G_\mathcal{A}$ for environment shown in Figure~\ref{fig:env-3d} on the right. Most robots have a topology homomorphic to a collection of chains. The chain can be open or become closed based on the deformation of the robot. For example, with a humanoid robot, when the two hands hold an object together, the topology of the robot change, where the two open chains connect and form a closed chain. When considering embedding the robot into the workspace, a closed chain has to be embedded and planned as a rigid body, while the open chains can be placed into any connected free regions.

The workspace $G_\mathcal{A}$ shown in Figure~\ref{fig:space-graph} cannot be drawn as a planar graph. Each node represents an obstacle, and each edge indicates the existence of a common region between two obstacles. A triangle on $G_\mathcal{A}$ represents there is some free region adjacent to all three obstacles. Simplex of order $k$ represents a common region among $k-1$ obstacles. The $S_\mathcal{W}$ can be constructed from $G_\mathcal{A}$ so that every $k$-order simplex on $S_\mathcal{W}$ is a $k+1$th-order simplex on $G_\mathcal{A}$. 

We can even study the topology and connectivity when removing an obstacle in the space using $G_\mathcal{A}$ and $S_\mathcal{W}$. For example, in Figure~\ref{fig:space-graph}, when we treat an obstacle ($A$) in the middle of the room as a movable object and attempt to relocate the obstacle, the resulting workspace topology can be easily derived from $G_\mathcal{A}$ by removing $A$ from the graph. This task-level information can be beneficial when planning in tight spaces where interaction with the environment is needed to admit feasible paths. 

Let us define a {\em state} of the robot as a collection of pairs $L(\cdot)=(w_i, q_j)$, where $w_i$ is a vertex on $S_\mathcal{W}$ and $q_j$ a vertex on $S_\mathcal{B}$. A path $P$ is a sequence of pairs $P = \{L(1), L(2), \ldots\}$, which is an abstraction of configurations along the path. A homomorphism exists between the workspace path of a chain and the corresponding path in the configuration space. 

\newpage

\section{Path Classes}

One major challenge in planning in the configuration space is that it is difficult to map the geometry of workspace objects into the configuration space, i.e., the $\pi^{-1}$ is hard to find near different workspace neighborhoods. The workspace's geometry is often tangled with the robot topology, leading to different descriptions in the configuration space. We show that the product topology of workspace cells and robot structure leads to important topological properties in the configuration space. We then show that the proposed path classification criterion will lead to a {\em finer} path class than the homotopy class. 

The joint cover categorizes free regions based on obstacle adjacency relations, or in other words, the Voronoi cells among the obstacles. Voronoi cells describe the connectivity information in the workspace. A path in $\mathcal{C}$ cannot be valid if the mapping in $W$ is not valid. We will use workspace topology and the topology of the robot to approximate the topology in the configuration space. The configuration space topology can help find feasible paths, separate paths, and explore possible interactions with the workspace to satisfy task-level requirements. First, we show that the joint hull can help distinguish paths topologically with respect to the obstacles. 

\begin{lemma}
Given a workspace of dimension $d$, and obstacles $\mathcal{O}_i$, $0 < i \leq N$, the free-regions creates $(d-1)$-dimensional holes in the joint cover.
\label{lemma:sch_holes}
\end{lemma}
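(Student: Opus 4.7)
The plan is to reduce the global claim to a local one on each joint hull $JH(\mathcal{O}_{i,j})$ and then glue the pieces using the adjacency graph $G_\mathcal{A}$. First I would work inside a single joint hull and describe the free region as the complement of $\mathcal{O}_i\cup\mathcal{O}_j$ inside the hull. Since the joint hull is a connected $d$-dimensional solid whose only filled portion consists of the two disjoint obstacles, the free part is a $d$-dimensional region that separates them, with bounding surface $(d-1)$-dimensional and supported on pieces of $\partial\mathcal{O}_i$ and $\partial\mathcal{O}_j$. A $(d-1)$-cycle enclosing this free part cannot be bounded by a $d$-chain supported only on the obstacles, because the obstacle material inside the hull splits into two disconnected components; a deformation-retract argument, or Alexander duality applied inside the hull, makes this nontriviality rigorous.

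Next I would globalize using the disjointness condition $X_{i,j}\cap X_{s,t}=\emptyset$ from Definition~\ref{def:jhjc}. Because distinct joint hulls only overlap along obstacles and never along free regions, a Mayer--Vietoris gluing ensures that the local $(d-1)$-cycles remain linearly independent in $H_{d-1}$ of the joint cover taken relative to $\bigcup_i\mathcal{O}_i$, which is the precise sense in which each free region contributes a $(d-1)$-dimensional hole. Free regions produced by the merging step in Algorithm~\ref{algorithm:joint-cover} that are adjacent to more than two obstacles are treated identically: they are still $d$-dimensional cavities bounded by $(d-1)$-dimensional portions of the adjacent obstacle boundaries, and $G_\mathcal{A}$ bookkeeps exactly how many independent such cavities exist.

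The step I expect to be the hardest is making the cavity argument precise when the obstacles are nonconvex or when a merged free region borders many obstacles at once, because the bounding surface of a free cell need not be a topological $(d-1)$-sphere. I would sidestep this by working combinatorially with the Delaunay $(d-1)$-faces lying on the obstacle boundaries that touch a given free cell: since Algorithm~\ref{algorithm:joint-cover} builds the joint cover from a Delaunay triangulation, these faces automatically assemble into a closed $(d-1)$-dimensional cycle around each free cell, and that cycle is exactly the $(d-1)$-hole claimed by the lemma.
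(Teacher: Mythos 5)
Your proposal formalizes the wrong statement, and the paper's own proof is much more modest: it is a one-sentence separation argument, namely that inside each joint hull the free region cuts the hull into isolated pieces (one containing $\mathcal{O}_i$, one containing $\mathcal{O}_j$), and that codimension-one separation is the sense in which a $(d-1)$-dimensional ``hole'' is created. The first concrete problem with your version is geometric: the free region $F$ of a joint hull $X_{i,j}$ is not an enclosed cavity. Besides the pieces of $\partial\mathcal{O}_i$ and $\partial\mathcal{O}_j$, its frontier contains a portion of the outer boundary of the joint hull itself, through which $F$ exits to the rest of free space. So there is no $(d-1)$-cycle ``enclosing this free part,'' and in your combinatorial fallback the Delaunay $(d-1)$-faces lying on obstacle boundaries around a free cell do \emph{not} assemble into a closed cycle around it --- the cell's boundary is completed by hull faces that lie on no obstacle.

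The homological formulation you chose also fails in exactly the dimension the paper cares about. Take $d=3$, a single contractible joint hull $X$ containing two disjoint contractible polyhedral obstacles $A=\mathcal{O}_i\cup\mathcal{O}_j$. The long exact sequence of the pair gives $H_2(X)=0\to H_2(X,A)\to H_1(A)=0$, hence $H_2(X,A)=0$: there are no nontrivial $(d-1)$-classes relative to the obstacles, and a Mayer--Vietoris gluing cannot keep ``linearly independent'' classes that already vanish locally. What is nonzero is $H_1(X,A)\cong\tilde{H}_0(A)\cong\mathbb{Z}$, a relative arc joining the two obstacles; its Lefschetz/Alexander dual is the separating Voronoi sheet inside $F$, i.e.\ a relative $(d-1)$-cycle \emph{rel the hull's outer boundary}, not rel $\bigcup_i\mathcal{O}_i$. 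A rigorous version of the lemma should therefore exhibit that separating sheet (equivalently, show $X_{i,j}\setminus F$ is disconnected, so crossings of the sheet are well defined and count winding relative to the obstacle pair), which is precisely the separation the paper asserts in one line; the ``enclosing cycle rel obstacles'' route, as written, proves nothing for $d\ge 3$, and it would only get weaker for merged free cells adjacent to more than two obstacles, where the same boundary-leakage issue occurs on every side.
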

% we can do the same for 2D workspace. 

\begin{proof}
The free-regions create $d-1$-dimensional holes in the joint cover as they cut each joint hull into isolated regions. 
\end{proof}

The obstacles do not necessarily create holes in the workspace or configuration space. The statement is only valid within the joint cover. Any path intersecting the joint cover can be identified by its winding relations with different obstacles.

% define the path classification
Two paths represented by sequences of states $L(\cdot)$ without loops are geometrically near each other if a one-to-one correspondence exists between the states on the respected paths. When there are loops on a given path, we derive its embedding after removing loops. Two different labels can represent the same state, such as an open-chain deforms in the same set of connected free regions can all correspond to the same state, as long as the two endpoints of the chain remain in the same region. In 2D workspace, $S_\mathcal{W}$ has at most $1$-simplex, and in 3D, $S_\mathcal{W}$ has at most $2$-simplex. We can use Algorithm~\ref{algorithm:contract} to simplify the representation for any robot state $L(\cdot)$, and we consider two states to be different if they do not have identical contracted representations $L^c(\cdot)$. For an open chain, the simplified representation can be contracted to a single pair representing the sequence of workspace free-regions. We can similarly use G\"{o}del numbering to determine the embedding uniquely.

\begin{algorithm}[h]
\SetAlgoLined
\SetKwInOut{Input}{input}\SetKwInOut{Output}{output}
\Input{$L(i)$, $S_\mathcal{W}$, $S_\mathcal{B}$}
\Output{$L^c(i)$}
Contract $1$-simplex on $S_\mathcal{B}$ embedded in same $w_i$\;
% \For{Each $k$-simplex on $S_\mathcal{W}$ for $k>1$}{
\For{Each remaining simplex on $S_\mathcal{W}$}{
\uIf{Corresponding region is compact}{
    \If{The corresponding key-points form same simplex on $S_\mathcal{B}$}{
        Contract the simplex on $S_\mathcal{B}$ and $S_\mathcal{W}$\;
    }
}
}
return the contracted $L^c(i)$\;
 \caption{Representation contraction}
 \label{algorithm:contract}
\end{algorithm}

Let us define the geometrical-homotopy path class as follows. Given two paths $f_1$ and $f_2$ between $x$ and $y$, if the simplified representation sequences for $f_1$ and $f_2$ are different, then the two paths belong to two classes. 

% % single point, finer; 
\begin{lemma}
For point robot, the proposed path class definition is {\em finer} than the homotopy class definition. 
\label{lemma:point_finer}
\end{lemma}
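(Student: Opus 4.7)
The plan is to establish two facts that together give the refinement: (i) any two paths in the same proposed class are homotopy equivalent, and (ii) there exist homotopy-equivalent paths placed in distinct proposed classes. For a point robot, $S_\mathcal{B}$ collapses to a single vertex, so a contracted state $L^c(\cdot)$ reduces to the workspace label $w_i$ and the path representation is simply the sequence of free-region labels it visits.

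For direction (i), I would first verify that each free region produced by Algorithm~\ref{algorithm:joint-cover} is simply connected. The regions are unions of Delaunay simplices with identical obstacle-adjacency signatures; no such union can enclose an obstacle, because a simplex on the far side of an enclosed obstacle would carry a different adjacency label and therefore would not have been merged in. Given two paths $f_1, f_2$ from $x$ to $y$ with identical simplified region sequences, I would then build a homotopy piecewise: within each region both sub-paths have the same entry and exit points on the shared boundary and lie in a simply connected set, so they are homotopic there; concatenating these local homotopies yields an obstacle-avoiding homotopy $F\colon[0,1]\times[0,1]\to W$ between $f_1$ and $f_2$. By Lemma~\ref{lemma:equivalence} this suffices to conclude they lie in the same homotopy class of $\mathcal{C}$.

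For direction (ii), strict refinement, I would exhibit a concrete workspace in which two paths are homotopic but their region-label sequences differ. The environment of Figure~\ref{fig:env_diff_paths}, where the two pillars and attached platforms do not touch the ceiling, is precisely such a case: no obstacle creates a topological hole in $W$, so all five colored paths are mutually homotopic, yet by Lemma~\ref{lemma:sch_holes} each joint hull is cut into distinctly labeled free regions by the obstacles, and paths on opposite sides of an obstacle traverse different label sequences. Hence their contracted sequences differ and they fall into different proposed classes, proving the refinement is strict.

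The main obstacle is the simple-connectivity claim for merged free regions in direction (i), because the greedy merging in Algorithm~\ref{algorithm:joint-cover} could in principle produce a region that wraps around an obstacle. I would discharge this by the adjacency-label argument sketched above: any triangulation cell wrapping around an obstacle must itself be adjacent to that obstacle and therefore carry a different adjacency signature, so it cannot be merged into the region under consideration. Once contractibility of individual cells is in hand, the cell-by-cell homotopy construction and the exhibition of the Figure~\ref{fig:env_diff_paths} example are routine.
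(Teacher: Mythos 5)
Your skeleton is reasonable and its first half is essentially the contrapositive of the paper's own argument (homotopy-distinct paths must be separated by a disconnection, and disconnected regions carry different labels), while your strictness example in the spirit of Figure~\ref{fig:env_diff_paths} matches the paper's remark that in 3D differently labeled regions may remain connected. The genuine gap is exactly the step you flag as the main obstacle: simple connectivity of the merged free regions is not established by your adjacency-label argument, and it can in fact fail. Your argument only shows that a simplex touching an obstacle $\mathcal{O}_k$ outside the region's signature is never merged into the region; it does not show that the region cannot wrap \emph{around} $\mathcal{O}_k$ together with the $k$-adjacent simplices that surround it. Concretely, take a single convex obstacle in the middle of a room whose walls are themselves an obstacle: every free Delaunay triangle is adjacent to both, so Algorithm~\ref{algorithm:joint-cover} merges them into one annular region, and two point paths passing the central obstacle on opposite sides have the identical one-region representation while lying in different homotopy classes --- precisely the situation your direction (i) must exclude. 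In 3D the same phenomenon appears as a slab-shaped region adjacent to two walls that is threaded by a pillar $\mathcal{O}_k$: the $\{i,j\}$-labeled region is connected but not simply connected, so the cell-by-cell homotopy cannot be concatenated as stated.

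To close this you need the property the paper leans on elsewhere (Lemma~\ref{lemma:sch_holes}, and Lemma~\ref{lemma:contract_config}'s assertion that states correspond to products of simply connected workspace regions): namely, that the decomposition cuts each joint hull into pieces none of which winds around an obstacle, so that two paths with the same label sequence cannot differ by a noncontractible loop. Either prove that refinement property for the output of Algorithm~\ref{algorithm:joint-cover} (for instance by further splitting any merged region whose closure has nontrivial first homology), or argue direction (i) the way the paper does, by showing that any homotopy obstruction between two point paths comes from an obstacle structure that disconnects the relevant free regions and therefore forces a difference somewhere in the two label sequences. Your strictness argument (ii) is fine as it stands and is the part where your write-up is more explicit than the paper's.
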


\begin{proof}
If the robot is just a single point, the representation $L(\cdot)$ can be simplified to just a vertex in $S_\mathcal{W}$. The path classes and states depend only on $S_\mathcal{W}$. In 2D, the workspace partition yields the same path classes as the homotopy class for any start and goal. In 3D, two regions with different labels may still be connected. If two regions are disconnected, they must have different labels, i.e., representations. Disconnection is required to produce different homotopy path classes. 
\end{proof}

Lemma~\ref{lemma:point_finer} directly follows from Lemma~\ref{lemma:sch_holes}. The classification is {\em finer} only in dimensions three or higher. In 2D, paths passing the obstacles differently belong to different homotopy classes as the 2D obstacles create topological holes. In 3D, the paths that go through different free regions may belong to the same homotopy class, but our criterion can identify and classify them into finer categories. 

Such classification is most beneficial when narrow passages exist or the obstacle has disproportional sizes in different dimensions, such as a nail shape obstacle. In these situations, the obstacles may not create topological holes in $\mathcal{C}$, but passing different surfaces of the obstacles can result in geometrically distanced paths. Using the proposed classification criterion, paths that pass through different surfaces of obstacles may be classified into different classes, especially when the surfaces create sharp edges. 

\begin{lemma}
For any given configuration $q$ with representation $L(i)$, a compact neighborhood $U$ exists in the configuration space with the same $L(i)$. 
\end{lemma}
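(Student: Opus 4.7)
The plan is to exploit the continuity of forward kinematics together with the local openness of the workspace free-regions. Write $L(i) = \{(w_{\sigma(1)}, q_1), \ldots, (w_{\sigma(k)}, q_k)\}$, so that the representation records which free region each of the $k$ key points $p_j(q)$ lies in at configuration $q$. The forward-kinematics map $q \mapsto (p_1(q), \ldots, p_k(q))$ is a continuous map from $\mathcal{C}$ into $W^k$, and this continuity is the workhorse of the argument.

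First, I would dispose of the \emph{generic} case, where each $p_j(q)$ lies in the topological interior of its assigned region $w_{\sigma(j)}$. Since the Delaunay-based cell decomposition of Algorithm~\ref{algorithm:joint-cover} yields regions whose interiors are open in $W$, I can choose, for each $j$, an $\epsilon_j > 0$ with $B(p_j(q), \epsilon_j) \subset w_{\sigma(j)}$. Continuity of the forward kinematics then furnishes a $\delta > 0$ such that any $q' \in B(q, \delta)$ keeps every $p_j(q')$ within $\epsilon_j$ of $p_j(q)$, which in turn forces $L(q') = L(q)$. Taking $U$ to be the closure of $B(q, \delta/2)$ yields a compact neighborhood, since the configuration space is locally a finite-dimensional manifold (product of Euclidean and circle factors) and hence locally compact.

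Second, I would handle the \emph{non-generic} case where some $p_j(q)$ sits on a boundary shared by several free regions. This is where I expect the main obstacle: naive perturbations can push $p_j$ into any of the incident cells, so the raw label $L$ could jump discontinuously. My remedy is to lean on Algorithm~\ref{algorithm:contract}: the contracted representation $L^c$ identifies adjacent simplices that share a key point, so it stays constant on a full neighborhood of $q$ as long as no key point crosses into a cell that is not already incident to its current position. Because only finitely many cells of the triangulation meet at any point, such a choice of $\epsilon_j$ always exists, and the continuity argument of the first step then applies verbatim with $L$ replaced by $L^c$.

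The most delicate bookkeeping lies in verifying that the contraction rules of Algorithm~\ref{algorithm:contract} behave well under small perturbations, in particular that the matching of simplices between $S_\mathcal{B}$ and $S_\mathcal{W}$ established at $q$ is preserved for all $q'$ in a sufficiently small ball. Once this is checked, compactness of $U$ is immediate by closing up a small open ball in the locally Euclidean configuration space, and the lemma follows.
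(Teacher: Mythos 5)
Your proposal is correct and follows essentially the same route as the paper, whose entire proof is the observation that small perturbations of the key points that do not leave their embedding free regions preserve the representation and correspond to nearby configurations; your generic case is exactly this argument made precise via continuity of forward kinematics, openness of the cells, and local compactness of $\mathcal{C}$. The only caveat is your non-generic case: by switching from $L$ to $L^c$ when a key point lies on a cell boundary you are proving a slightly different statement than the lemma's literal claim about $L(i)$, but this boundary situation is one the paper's own two-sentence proof silently assumes away, so your treatment is if anything more careful than the original.
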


\begin{proof}
Small perturbations of the key points without leaving the embedding free regions have the same representations. Small perturbations of key points will lead to similar configurations.
\end{proof}

\begin{lemma}
Every region of the configuration space with the same contracted representation $L^c(i)$ is contractable, so are their finite intersections.
\label{lemma:contract_config}
\end{lemma}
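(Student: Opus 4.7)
The plan is to combine two ingredients: the contractibility of the individual workspace cells produced by Algorithm~\ref{algorithm:joint-cover}, and the fibre-bundle description $\mathcal{C}\cong W\times P(\mathcal{B})$ introduced earlier in the section. Contractibility of a region with representation $L^c(i)$ should then follow by exhibiting an explicit deformation retraction, and the statement about finite intersections should fall out of the fact that the underlying Delaunay-based complex is closed under face intersection.

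First I would fix a representation $L^c(i)$ and unpack what the set $R(L^c(i))\subset\mathcal{C}$ of configurations carrying this representation actually is. Each pair in $L^c(i)$ pins a (possibly contracted) subcomplex of $S_\mathcal{B}$ into a specific (possibly contracted) free-region of $S_\mathcal{W}$, so $R(L^c(i))$ is the set of robot embeddings that place every key-point inside its assigned workspace cell while respecting the fixed inter-key-point distances. I would then verify that every cell produced by the joint-cover algorithm is contractible: each such cell is a merged union of Delaunay simplices sharing the same obstacle-adjacency label, hence path-connected and deformation-retractible onto its Voronoi-skeleton piece. With that in hand, I would construct the retraction of $R(L^c(i))$ in two stages: first, retract each key-point continuously to a canonical interior point of its assigned cell along a Voronoi-radial (or piecewise-linear) homotopy, carrying adjacent key-points along through $P(\mathcal{B})$; second, use the contractibility of $P(\mathcal{B})$ at that fixed embedding to collapse the residual internal freedom of the robot.

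For finite intersections I would argue by nerve-style reduction. A configuration lies in $R(L^c_1)\cap\cdots\cap R(L^c_m)$ precisely when each key-point lies in the intersection of the workspace cells assigned to it by $L^c_1,\ldots,L^c_m$. Because the joint cover is built from a simplicial complex, these cell intersections are again either empty or contractible faces, and the contraction step in Algorithm~\ref{algorithm:contract} ensures that the $S_\mathcal{B}$-side simplices that collapsed in each $L^c_j$ remain collapsed in their common refinement. Applying the same two-stage retraction inside the refined assignment then yields contractibility of the intersection.

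The hard part will be making the first stage of the retraction genuinely compatible with the rigid pairwise-distance constraints of $S_\mathcal{B}$: $R(L^c(i))$ is not literally a product of cells but a subspace of $W\times P(\mathcal{B})$ cut out by those constraints, so a naive ``product of contractibles'' appeal is insufficient. My preferred route is to construct the retraction at the level of $P(\mathcal{B})$ rather than coordinate-wise, exhibiting a continuous path from an arbitrary admissible embedding to a fixed reference embedding that keeps every key-point inside its assigned cell throughout; the local convexity of the merged Delaunay cells should absorb the small perturbations required to maintain the distance constraints, and the fibre-bundle projection $\pi$ then lifts this $P(\mathcal{B})$-level homotopy to a configuration-space homotopy inside $R(L^c(i))$.
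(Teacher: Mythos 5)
Your argument hinges on the stage-one retraction: every key point is moved to a canonical interior point of its assigned cell while ``the local convexity of the merged Delaunay cells'' absorbs the perturbations needed to preserve the fixed pairwise distances. That is exactly the point where a proof is required, and it is not supplied. The cells produced by Algorithm~\ref{algorithm:joint-cover} are unions of Delaunay simplices sharing an obstacle-adjacency label; after merging they are in general neither convex nor locally convex --- they bend around obstacle corners and have obstacle faces on their boundary. For a chain with fixed link lengths whose key points are confined to such non-convex cells, the set of admissible embeddings can already fail to be path-connected (a long rigid link whose two endpoints must both stay inside a bent, corridor-shaped cell is the standard counterexample), so the existence of an in-cell homotopy from an arbitrary admissible embedding to a fixed reference embedding is essentially the content of Lemma~\ref{lemma:contract_config} itself and cannot be assumed. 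Your preliminary claim that each merged cell is contractible because it ``deformation-retracts onto its Voronoi-skeleton piece'' is also not an argument: that helps only if the skeleton piece is itself contractible, and a region adjacent to a single obstacle (or a hole-region) can wrap around that obstacle, so contractibility of the cells needs hypotheses or a proof of its own.

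For comparison, the paper's own proof is the one-line assertion that every state is homeomorphic to a product of finitely many simply connected workspace regions --- precisely the ``naive product of contractibles'' appeal that you, reasonably, reject as insufficient once the rigid distance constraints of $S_\mathcal{B}$ are taken into account. So your proposal is not a rederivation of the paper's argument but an attempt to be more careful; the difficulty is that the step you yourself flag as ``the hard part'' is the whole substance of the statement, and the proposal leaves it open, resting on a convexity property the merged cells do not have. The finite-intersection part inherits the same gap (and, additionally, intersections of merged cells are not literally faces of the Delaunay complex once the merging step has been performed), so as written the proposal does not establish the lemma.
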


\begin{proof}
Every state is homeomorphic to the product of finitely many simply connected regions in the workspace.
\end{proof}

\begin{lemma}
Given two configurations $p$ and $q$, and the corresponding representations $L^c_p(\cdot)$ and $L^c_q(\cdot)$, if loops based at arbitrary configuration $x$ passing through $p$ and passing through $q$ belong to two homotopy classes, then $L^c_p(\cdot)$ and $L^c_q(\cdot)$ must be different. 
\label{lemma:rep_equal}
\end{lemma}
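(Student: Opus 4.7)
The plan is to prove the contrapositive: assume $L^c_p(\cdot) = L^c_q(\cdot)$ and show that any loop based at $x$ through $p$ is homotopic to some loop based at $x$ through $q$, so these two loops cannot witness distinct homotopy classes.

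First I would invoke Lemma~\ref{lemma:contract_config}. Since $L^c_p = L^c_q$, the two configurations $p$ and $q$ lie in a single region $R$ of the configuration space associated with this common contracted representation, and this region is contractible. In particular, $R$ is path-connected, so there exists a continuous path $\gamma \colon [0,1] \to R$ with $\gamma(0) = p$ and $\gamma(1) = q$. Contractibility is what matters here because it guarantees that any other path between $p$ and $q$ inside $R$ is homotopic to $\gamma$ rel endpoints, so the choice of $\gamma$ is irrelevant to the fundamental group argument that follows.

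Next I would take an arbitrary loop $\alpha$ based at $x$ that passes through $p$, and decompose it as $\alpha = \alpha_1 \cdot \alpha_2$ where $\alpha_1$ is the segment from $x$ to $p$ and $\alpha_2$ is the segment from $p$ back to $x$. Splice in the trivial insertion $\gamma \cdot \gamma^{-1}$ at the midpoint, producing
\[
\alpha \ \simeq\ \alpha_1 \cdot \gamma \cdot \gamma^{-1} \cdot \alpha_2 \ =\ (\alpha_1 \cdot \gamma) \cdot (\gamma^{-1} \cdot \alpha_2),
\]
which is a loop based at $x$ passing through $q$. The homotopy $\alpha \simeq \alpha_1 \cdot \gamma \cdot \gamma^{-1} \cdot \alpha_2$ is a standard reparameterization/null-homotopy of the back-and-forth piece, valid inside the configuration space since $\gamma$ lives entirely in $R \subseteq \mathcal{C}$. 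Symmetrically, any loop through $q$ is homotopic to one through $p$. Hence the homotopy classes of loops based at $x$ passing through $p$ coincide with those through $q$, contradicting the hypothesis.

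The only subtle step is step two, where contractibility (rather than mere path-connectedness) of $R$ is used to ensure that the splicing path $\gamma$ is essentially canonical; otherwise different choices of $\gamma$ inside $R$ might conjugate the loop by different elements and one would have to track which homotopy class the image lands in. Lemma~\ref{lemma:contract_config} handles this cleanly, and also handles finite intersections, which I would cite if $p$ and $q$ happen to lie on boundaries between several representations that all contract to the same $L^c$. Beyond this, the rest is a routine fundamental-groupoid manipulation and does not require any further geometric input about the joint cover or the robot complex $S_\mathcal{B}$.
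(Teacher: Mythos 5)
Your proposal is correct and follows essentially the same route as the paper: both argue the contrapositive, using the preceding contractibility lemma (Lemma~\ref{lemma:contract_config}) to conclude that equal contracted representations place $p$ and $q$ in a common region through which a loop via $p$ deforms to a loop via $q$. You merely make explicit, via the $\gamma\cdot\gamma^{-1}$ splicing, the deformation step that the paper phrases informally as ``no obstacles prevent these key points from deforming between $p$ and $q$.''
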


\begin{proof}
For a given point $x$, if the loop passing through $p$ and $q$ belong to different homotopy classes, obstacles exist that prevent the loop from passing through $p$ to deform to $q$ in the configuration space. If $L(p)$ and $L(q)$ have the same representation, each key point must be in the same embedding region following the preceding Lemma. Therefore, no obstacles prevent these key points from deforming between $p$ and $q$. The omitted key points must be along a fixed path passing through a connected sequence of regions, thus can deform between $p$ and $q$. Therefore, two configurations $p$ and $q$ are connected in the same free regions if they have the same representation.  
\end{proof}

\newpage

\begin{theorem}
Given two paths in $\mathcal{C}$, if two paths belong to different homotopy classes, then the two paths must have at least one different representation. 
\label{theorem:diff_path_rep}
\end{theorem}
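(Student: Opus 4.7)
The plan is to argue the contrapositive: suppose two paths $f_1, f_2 : [0,1]\to\mathcal{C}$ share an identical sequence of contracted representations $L^c(\cdot)$ at every parameter value (after a suitable common reparameterization), and show that $f_1$ and $f_2$ must be homotopic rel endpoints. Since the theorem asserts that distinct homotopy classes force at least one differing representation, establishing this contrapositive closes the argument.

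First I would set up the reparameterization. Both paths induce a finite sequence of state labels $L^c(1), L^c(2), \ldots, L^c(m)$ as they cross from one simplicial cell in $S_\mathcal{W}\times S_\mathcal{B}$ into an adjacent one; by hypothesis these sequences agree. After inserting break times $0 = t_0 < t_1 < \cdots < t_m = 1$ on both paths so that the $k$th segment lies in the region $R_k\subset\mathcal{C}$ carrying label $L^c(k)$, I can compare $f_1\big|_{[t_{k-1},t_k]}$ with $f_2\big|_{[t_{k-1},t_k]}$ segment by segment. The endpoints $f_1(t_k)$ and $f_2(t_k)$ lie in the intersection $R_k\cap R_{k+1}$, which corresponds to the same boundary simplex of both representations.

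Next, I would apply Lemma~\ref{lemma:contract_config}, which guarantees that each $R_k$ and finite intersections such as $R_k\cap R_{k+1}$ are contractible. Contractibility supplies a straight-line-style homotopy inside $R_k$ between the segments $f_1\big|_{[t_{k-1},t_k]}$ and $f_2\big|_{[t_{k-1},t_k]}$, fixing their endpoints after first homotoping $f_1(t_k)$ to $f_2(t_k)$ within the contractible transition set $R_k\cap R_{k+1}$. Concatenating these piecewise homotopies over $k=1,\ldots,m$ produces a single continuous homotopy $H:[0,1]\times[0,1]\to\mathcal{C}$ from $f_1$ to $f_2$ rel endpoints, contradicting the assumption that $f_1$ and $f_2$ lie in different homotopy classes. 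Lemma~\ref{lemma:rep_equal} supplies the supporting fact that configurations with identical contracted representations are not separated by configuration-space obstacles, which is what guarantees the interpolation stays inside $\mathcal{C}$.

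The main obstacle I anticipate is the reparameterization and alignment step: the two paths may visit the common sequence of cells at very different rates, possibly with spurious excursions that Algorithm~\ref{algorithm:contract} removes only at the level of states. I would need to argue carefully that any such loop-forming excursion in one path that is absent in the other is itself null-homotopic inside a single contractible region, so that after erasing it the two paths meet the segment-by-segment hypothesis used above. The rest is routine gluing of homotopies, but ensuring continuity at each transition time $t_k$ and that the interpolation never exits the configuration space is where the argument must be spelled out most carefully.
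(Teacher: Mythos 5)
Your proposal is correct in spirit but takes a genuinely different route from the paper. The paper proves the implication directly and very tersely: it asserts that two paths in different homotopy classes must be separated by obstacles, that such obstacles separate some states along the paths, and that separated states have different representations --- essentially an appeal to intuition rather than a constructed argument. You instead prove the contrapositive constructively: assuming identical contracted-representation sequences, you build an explicit homotopy rel endpoints by aligning the paths cell by cell, moving the transition points inside the contractible intersections, and filling each aligned segment inside the contractible region, invoking Lemma~\ref{lemma:contract_config} for contractibility of the regions and their finite intersections and Lemma~\ref{lemma:rep_equal} to keep the interpolation inside $\mathcal{C}$. This is the standard nerve-style argument and, if completed, is strictly more rigorous than the paper's proof; what it costs you is exactly the alignment/loop-erasure step you flag at the end --- two paths may realize the same contracted sequence with different pacing or with excursions that Algorithm~\ref{algorithm:contract} removes only at the level of states, and you must show each such excursion is null-homotopic inside a single contractible region before the segment-by-segment comparison applies. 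Note that the paper's one-line argument quietly hides the same difficulty (its claim that differing homotopy classes force an obstacle between \emph{states on the given paths} presumes a comparable alignment of states), so your version makes explicit, and would actually discharge, the burden the paper leaves implicit.
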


\begin{proof}
If two paths are in different homotopy classes, obstacles must exist preventing the paths from continuously deforming to each other. Then, there must exist obstacles that separate some states along the paths. Different states have different representations. 
\end{proof}

\begin{corollary}
Any two paths that belong to two different homotopy classes on $\mathcal{C}$ maps to two paths in different classes defined by the proposed contracted representations on $S_\mathcal{W}\times S_\mathcal{B}$. 
\end{corollary}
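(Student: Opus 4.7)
The plan is to chain together Lemma~\ref{lemma:equivalence} and Theorem~\ref{theorem:diff_path_rep}, then translate the representational difference into the product complex $S_\mathcal{W}\times S_\mathcal{B}$. At heart the corollary is a repackaging of the theorem under the fiber-bundle/simplicial-abstraction correspondence that has been built up in the paper, so the proof plan is short but needs to track the various spaces carefully.

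First, I would take two paths $f_1,f_2$ in $\mathcal{C}$ assumed to lie in different homotopy classes. By Lemma~\ref{lemma:equivalence}, these paths correspond, under the homotopy equivalence $\mathcal{C}\simeq W\times P(\mathcal{B})$ together with the homotopy equivalence $W\times P(\mathcal{B})\simeq S_\mathcal{W}\times S_\mathcal{B}$, to two genuinely distinct paths in $S_\mathcal{W}\times S_\mathcal{B}$. Any homotopy collapsing their images in the product complex would pull back to a homotopy of the originals in $\mathcal{C}$, contradicting the hypothesis.

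Next, I would apply Theorem~\ref{theorem:diff_path_rep} to conclude that there is at least one parameter value at which the contracted representations $L^c(\cdot)$ along the two paths disagree. Unpacking this in $S_\mathcal{W}\times S_\mathcal{B}$, each representation $(w_i,q_j)$ identifies a definite cell of the product complex (with uniqueness of the encoding guaranteed by the G\"odel numbering used to label regions), so distinct representations correspond to distinct cells. By the definition of the geometrical-homotopy path class (two paths lie in different classes whenever their simplified representation sequences disagree somewhere), this immediately places $f_1$ and $f_2$ in different classes on $S_\mathcal{W}\times S_\mathcal{B}$.

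The main obstacle I anticipate is purely bookkeeping rather than conceptual: making rigorous the correspondence between paths in $\mathcal{C}$ and paths in the product simplicial complex under the two homotopy equivalences, and in particular ensuring that the contracted representation, defined pointwise via $L(\cdot)$ and Algorithm~\ref{algorithm:contract}, assembles into a well-defined and reparametrization-invariant sequence along the whole path. Once that technical correspondence is in place, Lemma~\ref{lemma:equivalence} supplies distinctness of the lifted paths, Theorem~\ref{theorem:diff_path_rep} furnishes the representational witness, and the definition of the new path class converts that witness directly into a class separation on $S_\mathcal{W}\times S_\mathcal{B}$.
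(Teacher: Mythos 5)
Your argument is essentially the paper's intended one: the paper states this corollary without proof as an immediate consequence of Theorem~\ref{theorem:diff_path_rep} (a representational difference along the paths) combined with the correspondence of Lemma~\ref{lemma:equivalence} and the definition of the proposed classes on $S_\mathcal{W}\times S_\mathcal{B}$, which is exactly the chain you assemble. Your extra care about passing from plain representations $L(\cdot)$ to contracted ones $L^c(\cdot)$ is a reasonable tightening, but it does not change the route.
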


\begin{lemma}
Two paths in $\mathcal{C}$ may have different contracted representations $L^c_p(\cdot)$ but belong to the same homotopy class on $\mathcal{C}$. 
\end{lemma}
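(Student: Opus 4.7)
The plan is to prove this existence claim by exhibiting a concrete construction, since the statement asserts that the proposed path classification is strictly finer than the homotopy classification (and is therefore not implied by Theorem~\ref{theorem:diff_path_rep}). I would pick a three-dimensional workspace that already appears in the paper's motivation, namely the configuration of Figure~\ref{fig:env_diff_paths} with two pillars and attached platforms that do \emph{not} induce a topological hole in $\mathcal{C}$, and I would work with a point robot so that the contracted representation reduces entirely to the sequence of vertices in $S_\mathcal{W}$ (as in Lemma~\ref{lemma:point_finer}), removing all complications coming from $S_\mathcal{B}$.

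With the setting fixed, the first step is to choose two paths $f_1,f_2$ joining the same endpoints $x,y$ which pass the obstacles on geometrically different sides, e.g.\ one going around a pillar on the left and one going around on the right. By construction these paths traverse free regions carrying different labels under the G\"odel-numbering scheme of the definition, so their vertex sequences in $S_\mathcal{W}$, and therefore their contracted representations $L^c_{f_1},L^c_{f_2}$, differ in at least one entry. The second step is to exhibit a continuous homotopy $H:[0,1]\times[0,1]\to\mathcal{C}$ with $H(\cdot,0)=f_1$, $H(\cdot,1)=f_2$, and $H(\{0,1\}\times[0,1])=\{x,y\}$; since the obstacles do not reach the ceiling of the workspace, the homotopy can be realized by first lifting the path above the pillars and then lowering it on the other side, which is a valid continuous deformation inside $\mathcal{C}$ because the lifted region is obstacle-free.

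To make the argument airtight I would verify the two ingredients separately: (i) that the labels attached to the regions on the left and right of a pillar are indeed distinct under Definition of the labeling (this is immediate because adjacency to different pairs of obstacles yields distinct G\"odel numbers), so that the contracted representation distinguishes $f_1$ from $f_2$ even after the contraction rule of Algorithm~\ref{algorithm:contract} is applied; and (ii) that the vertical-lifting homotopy never intersects the obstacles, which follows from the assumption that the pillars are bounded in height and do not touch the roof, so that the projection onto the vertical axis has a gap above the obstacles in which the deformation can live. Concluding, $[f_1]=[f_2]$ in $\pi_1(\mathcal{C},x)$ while $L^c_{f_1}\neq L^c_{f_2}$, which is exactly the claim.

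The main obstacle is not the construction itself but justifying that the two paths do indeed receive different contracted labels \emph{after} Algorithm~\ref{algorithm:contract} has been applied, because the contraction step is designed precisely to identify states that are deformable without leaving a compact region. I would therefore need to argue carefully that the free regions on the two sides of a pillar are separated by the pillar itself in the workspace projection, and hence lie in distinct vertices of $S_\mathcal{W}$ that cannot be contracted together, even though in the full configuration space the corresponding paths are homotopic via an out-of-plane deformation; this mismatch between workspace-level separation and configuration-space-level connectivity is exactly what makes the proposed classification strictly finer than homotopy, and is what the example is designed to display.
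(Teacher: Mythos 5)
Your proposal is correct and follows essentially the same route as the paper: the paper's proof also argues by exhibiting paths that traverse different cells of $S_\mathcal{W}$ (hence different contracted representations) around an obstacle whose bounding surfaces are not disjoint, so the paths remain homotopic, with the point-robot case delegated to Lemma~\ref{lemma:point_finer}. Your version simply instantiates that point-robot case concretely (pillars not reaching the roof, vertical-lifting homotopy), which is a more explicit but not fundamentally different argument.
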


\begin{proof}
If the robot is a point robot, the statement follows from Lemma~\ref{lemma:point_finer}. If the robot is not a point robot, such as an open chain, two paths that go around an obstacle near different surfaces can have different representations, as the paths go through different cells in $S_\mathcal{W}$. However, if the two surfaces are not disjoint, the two paths will belong to the same homotopy class. 
\end{proof}

The above lemmas and theorem show that workspace regions and robot structure representation capture the topology information in the configuration space. Therefore, two configuration space paths in different homotopy classes must pass through configurations with non-identical states. However, non-identical states may have the same fundamental group in the configuration space. Therefore, we have,

\begin{theorem}
The proposed path class definition is {\em finer} than the homotopy class. 
\label{theorem:config_finer}
\end{theorem}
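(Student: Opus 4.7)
The plan is to derive Theorem~\ref{theorem:config_finer} as an almost-immediate consequence of the two preceding results, by unpacking the word ``finer'' into its two standard components: (i) the proposed partition refines the homotopy partition, and (ii) the refinement is strict. I would begin by fixing notation: let $\sim_H$ denote the equivalence on paths in $\mathcal{C}$ given by homotopy with fixed endpoints, and let $\sim_R$ denote the equivalence given by equality of contracted representation sequences $L^c(\cdot)$ produced by Algorithm~\ref{algorithm:contract}. Saying $\sim_R$ is finer than $\sim_H$ means every $\sim_R$-class is contained in a single $\sim_H$-class, and at least one $\sim_H$-class splits into more than one $\sim_R$-class.

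For the containment direction, I would simply take the contrapositive of Theorem~\ref{theorem:diff_path_rep}: if two paths have identical representations everywhere along them, then they cannot lie in different homotopy classes, so they lie in the same one. Equivalently, via the Corollary immediately following that theorem, the map from $\sim_R$-classes to $\sim_H$-classes induced by the homotopy equivalence $S_\mathcal{W}\times S_\mathcal{B}\simeq W\times P(\mathcal{B})$ (together with the fiber-bundle identification $\mathcal{C}\cong W\times P(\mathcal{B})$ discussed earlier) is well-defined. This gives the refinement.

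For strictness, I would invoke the last lemma before the theorem, which exhibits concrete scenarios in which two paths with distinct $L^c$ sequences are nonetheless homotopic in $\mathcal{C}$: either (a) a point robot in 3D passing an obstacle on different sides that do not bound a topological hole, so by Lemma~\ref{lemma:point_finer} the workspace cells differ although the loops are null-homotopic, or (b) an open chain routed around different faces of a non-hole-creating obstacle, where the paths traverse different cells of $S_\mathcal{W}$. Either example suffices as the witness that at least one homotopy class strictly contains multiple representation classes, completing the proof.

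The main obstacle I foresee is not in either bullet individually but in being precise about what ``having a representation'' means for a continuous path, since $L^c(\cdot)$ is defined on states (discrete labels). I would address this by declaring the representation sequence of a path to be the ordered list of distinct contracted states it visits (collapsing intervals in which $L^c$ is constant), and noting that this is well-defined up to the loop-removal convention already stated for sequences of states. Once that bookkeeping is fixed, both halves of the argument are direct appeals to results already established, and no further topology is required.
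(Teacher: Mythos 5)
Your proposal is correct and follows essentially the same route as the paper, which proves Theorem~\ref{theorem:config_finer} as a direct consequence of Theorem~\ref{theorem:diff_path_rep} (giving the refinement direction) and the preceding lemma exhibiting homotopic paths with distinct contracted representations (giving strictness). Your version is simply a more explicit unpacking of that argument, including useful bookkeeping on what the representation sequence of a continuous path means, which the paper leaves implicit.
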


\begin{proof}
The theorem is a direct extension of the proceeding Lemmas and Theorem. 
\end{proof}

% \subsection*{Path planning framework}

We present the following planning framework based on the path classes defined in the preceding sections. The framework finds topological paths globally and then interpolates the feasible geometries for the robots to implement. This framework aims to separate global navigation from local geometrical implementation. The separation can allow adjustments of local paths without rescheduling the global plan. 

\begin{enumerate}
    \item Find path classes, and adjacency relations of different free regions; 
    \item Find paths $f$ on $S_\mathcal{W}\times S_\mathcal{B}$;
    \item Find point path $f_p$ that realizes $f$ in $W$;
    \item Divide $f_p$ into pieces that are simple and mostly visible;
    \item Interpolate geometrical trajectories piece-wise on $f_p$, compute feasible robot deformations; 
    \item Adjust paths piece-wise to satisfy robot geometrical constraints; 
    \item Optimize and smooth the trajectory if needed. 
\end{enumerate}

The proposed framework is most beneficial when there are narrow passages in the space. The computation needed to produce the path classes can exceed that of sampling-based motion planning in a relatively open space. As the environment and robot become complex, the proposed framework can produce paths according based on tasks and allow manipulations and collaborations in tight spaces. 

\section{Applications and Verification}

First, in Figure~\ref{fig:env-3d}, we show the environment with multiple obstacles, as our proposed path class mainly differs from the homotopy class in 3D. The obstacles are not creating topological holes, and all paths belong to the same homotopy class. However, our definition would recognize additional classes. The joint cover divides the free regions among adjacent obstacle pairs uniquely. A path passing through different free regions in the joint hulls creates different path classes for a single point. 
% Therefore, the proposed path classes are finer than the homotopy classes. 

One primary aim of this paper is to show that the analysis of the workspace along and the robot structural information is sufficient to capture topological information in the configuration space. The resulting path class information can derive much helpful information that usually is extracted using various methods. 

For example, in~\citet{mccarthy2012proving}, the users used the sampling-based method to compute $\alpha$-shapes of configuration space obstacles and prove the non-existence of paths. In the proposed method, the structure $S_\mathcal{W}\times S_\mathcal{B}$ can derive the same information. The idea is very similar to the process used in~\citet{schwartz1983piano}. We can follow the below steps, 
\begin{itemize}
    \item If there does not exist a sequence on $S_\mathcal{W}$ that connects start and goal representation regions, no path would exist;
    \item If the sequence on $S_\mathcal{W}$ contains regions that cannot have valid robot embedding, remove this sequence;
    \item If no sequence remains, there does not exist a path. 
\end{itemize}

\begin{figure*}[h]
\begin{center}
\mbox{
    % \hspace{-0.2in}
    % \usebox{\imageboxRoom}
    % \hspace{-0.1in}
    % \raisebox{\dimexpr.5\ht\imageboxRoom-.47\height}{% Raise smaller image into place
    % \includegraphics[width=1.5in]{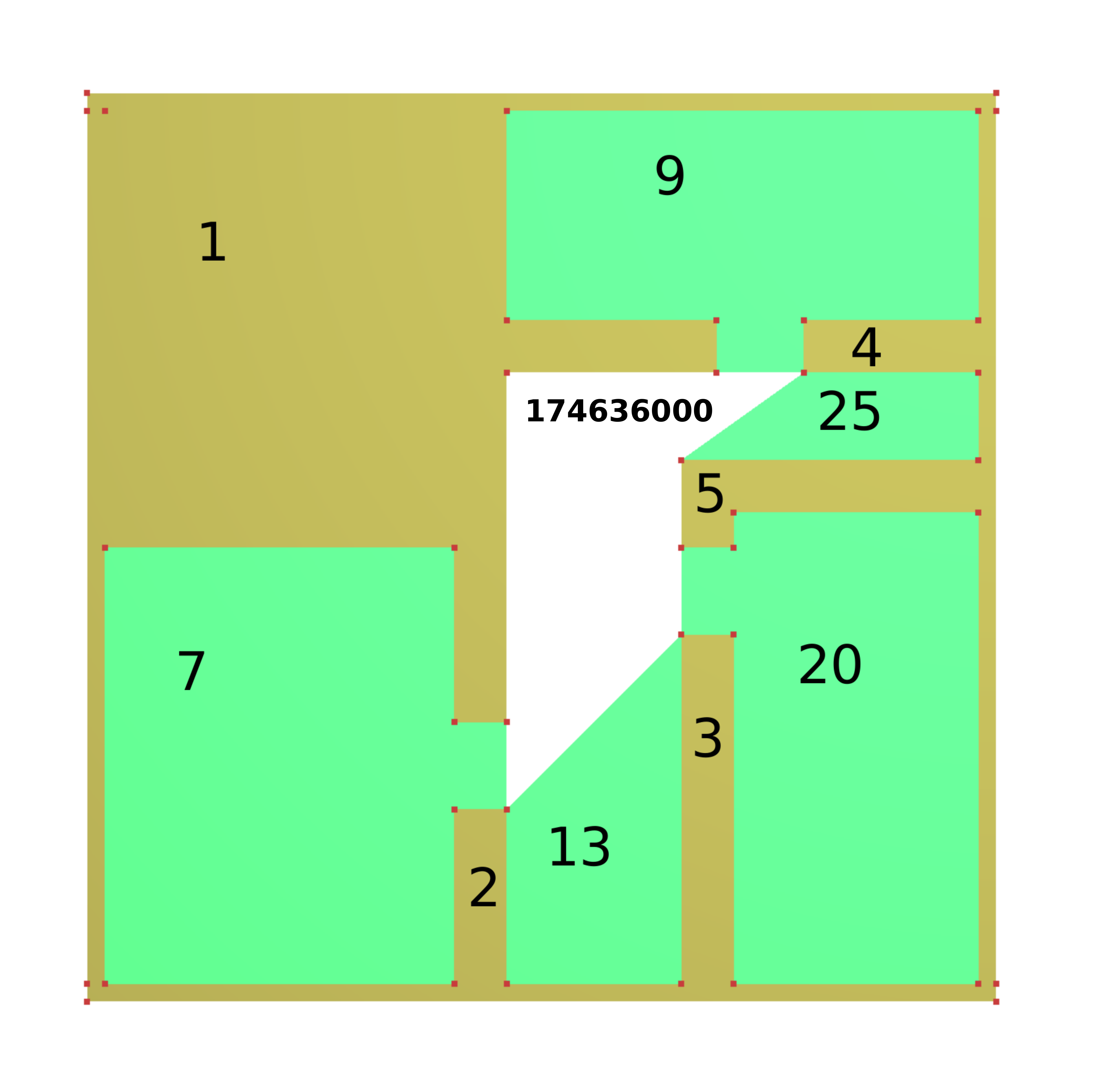}}
    % \raisebox{\dimexpr.5\ht\imagebox-.5\height}{% Raise smaller image into place
    %   \subfigure[Random gray polygons and corresponding joint cover obtained using Algorithm~\ref{algorithm:joint}. ]{
    %   \includegraphics[height=1.1in,width=1.55in]{figs/env_random_1.png}
    %   }
    % %   \hspace{-2em}
    % %   \raisebox{\dimexpr.5\ht\imagebox-.5\height}{% Raise smaller image into place
    %   \subfigure[Random gray polygons and corresponding joint cover obtained using Algorithm~\ref{algorithm:joint}. ]{
    %   \includegraphics[height=1.1in,width=1.55in]{figs/env_random_2.png}
    %   }
    % \hspace{-0.1in}
    % \raisebox{\dimexpr.5\ht\imageboxRoom-.42\height}{
    \subfigure[Environment with three boxes. ]{
    \label{fig:env-arm-box}
    \includegraphics[height=1.7in]{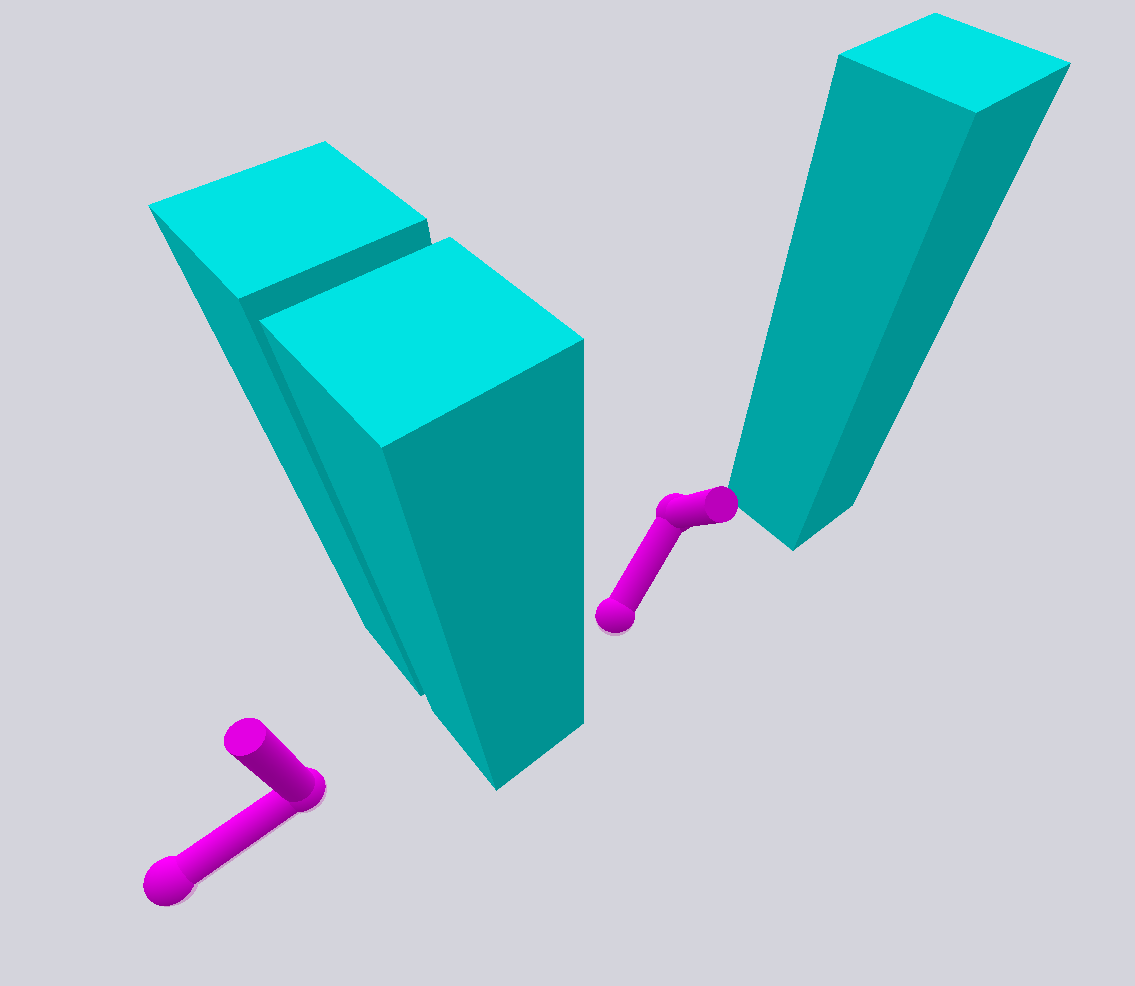}
    }
    \subfigure[Different path classes for an arm traced along the base.]{
    \label{fig:env-paths}
    \includegraphics[height=1.7in]{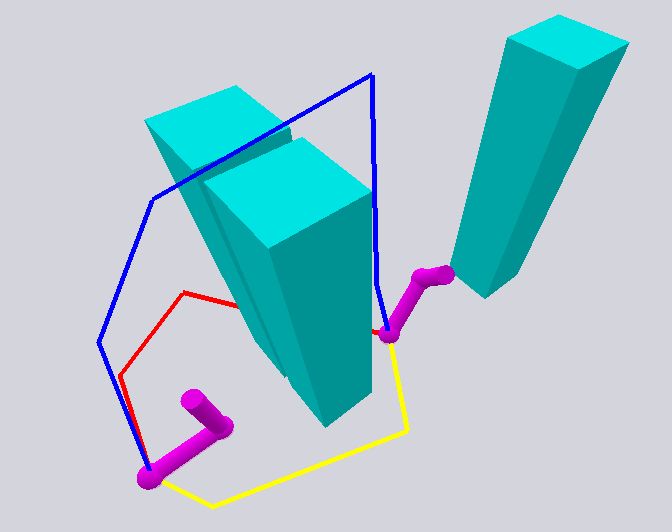}
    }
    % \hspace{-0.1in}
    % \raisebox{\dimexpr.5\ht\imageboxRoom-.42\height}{
    \subfigure[Planning a 4R arm in an environment with narrow passages.]{
    \label{fig:env_narrow}
    \includegraphics[width=1.9in]{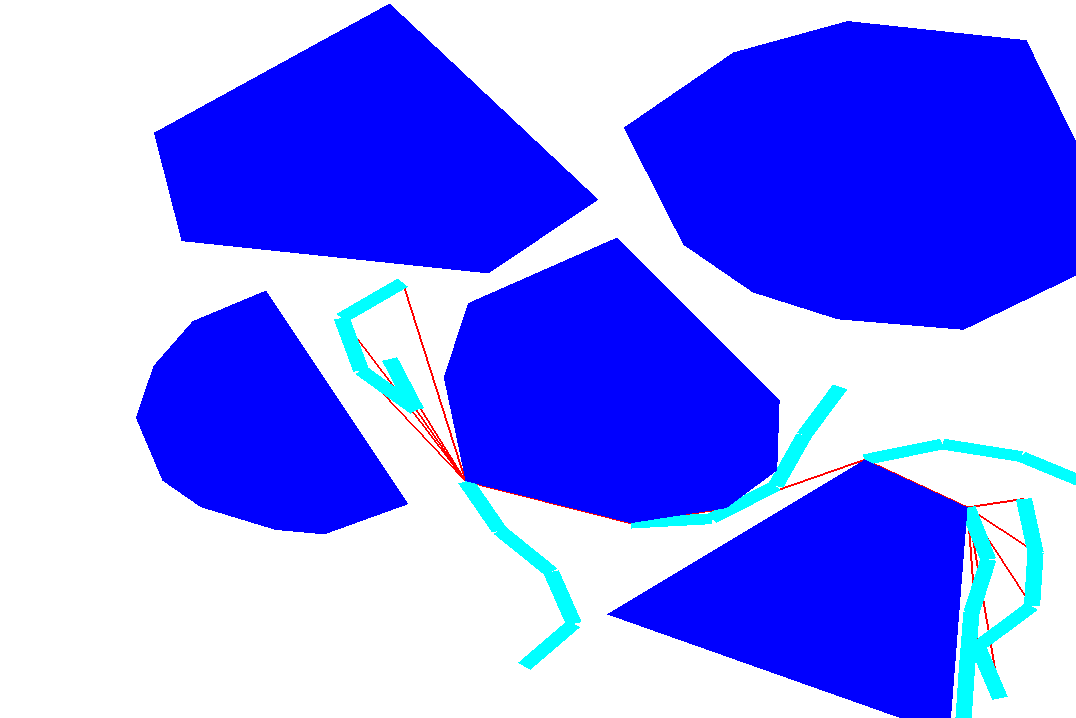}
    }
}
\end{center}
    \caption{Joint covers of different environments, path classes, and narrow passage planning scenario. }
    \label{fig:room_env_region}
\end{figure*}

In Figure~\ref{fig:env-arm-box}, two boxes have very close distances, but the pink arm cannot move through the narrow region between the boxes. Configuration space narrow passage among the two boxes can be hard to detect, but the free-region derived in the workspace can be directly used to test against the robot's geometry to determine if a path exists through some region. In the environment shown in Figure~\ref{fig:env-arm-box}, all paths of the arm belong to the same homotopy class, but our criterion will show that there three colored paths shown in Figure~\ref{fig:env-paths} are different, and indeed, the red and the yellow paths are pretty distanced geometrically. The criterion can have more advantages when the robot becomes more complex.

The proposed planning framework further utilizes such workspace information. By considering only the key points of the robot, we can quickly check the feasibility of the free regions with respect to the size of the key points. Though the paths for key points do not necessarily mean the existence of paths for the entire robot, the counterexamples would usually include narrow passages where paths are difficult to find. For example, in Figure~\ref{fig:env_narrow}, the paths for a four-R arm are found using the proposed framework, where the sampling-based planning approach failed to find a solution after $4000$ valid samples. In the workspace, we can directly test the valid embedding sequence against the robot geometry and workspace constraints, and deformations of robots can be derived to find valid configurations to fit through the narrow passages. 

We can also use the path classes to find narrow passages or paths with the most chance for survival against probabilistic events~\citep{erickson2009survivability,lyu2016k}. The robot structural information in path classes may help validate paths for multi-agent planning or finding particular configurations to fit through narrow passages, such as the planning of the twistycool problem in OMPL~\citep{sucan2012open}. 

The proposed path classes are distant geometrically. The resulting path classes are good candidates for multiple agents to avoid collisions. The distance path classes can also allow collaborations in tight spaces. We can further introduce pair-wise distance constraints among key points of different robots in the optimization step, coordinating the paths for the collaborating robots to maintain relative distances and avoid collisions. Such problems can be solved using the proposed planning framework.

\section{Conclusions}

This work presents a path class definition that combines the workspace geometry and the topology of the robot. The resulting path class is finer than the homotopy path classes. The classification captures geometric and topological information in the configuration space and can help identify connectivity, obstacle topology, and compare path qualities in different path classes. Building upon the presented path classes and the algorithms to find them, we propose a new planning framework that separates the global topological plan from local geometrical interpolation. The planning framework, like the proposed path classes, builds upon the workspace's geometrical analysis. The details of the geometrical implementation of the topological plan are ongoing work. The presented work gives us theoretical foundations for the correctness of separated analysis of workspace geometry and the topology of the robot and then combines them to piece together trajectories in configuration space. One important future work is to reduce further the size and complexity of $S_\mathcal{W}\times S_\mathcal{B}$. Currently, the map $\pi$ from $\mathcal{C}$ to $\mathcal{W}$ is not being carefully studied. We believe further analysis can be introduced to the map $\pi$ to produce simpler $S_\mathcal{B}$ near different positions in $\mathcal{W}$. 

%Some initial results have shown advantages near narrow passages and are currently under review.

\bibliographystyle{plainnat}
\bibliography{refs}

\begin{thebibliography}{40}
\providecommand{\natexlab}[1]{#1}
\providecommand{\url}[1]{\texttt{#1}}
\expandafter\ifx\csname urlstyle\endcsname\relax
  \providecommand{\doi}[1]{doi: #1}\else
  \providecommand{\doi}{doi: \begingroup \urlstyle{rm}\Url}\fi

\bibitem[Babel and Zimmermann(2015)]{babel2015planning}
Luitpold Babel and Thomas Zimmermann.
\newblock Planning safe navigation routes through mined waters.
\newblock \emph{Eur. J. Oper. Res.}, 241\penalty0 (1):\penalty0 99--108, 2015.

\bibitem[Bhattacharya and Ghrist(2018)]{bhattacharya2018path}
Subhrajit Bhattacharya and Robert Ghrist.
\newblock Path homotopy invariants and their application to optimal trajectory
  planning.
\newblock \emph{Ann. Math. Artif. Intell.}, 84\penalty0 (3-4):\penalty0
  139--160, 2018.

\bibitem[Bhattacharya et~al.(2012)Bhattacharya, Likhachev, and
  Kumar]{bhattacharya2012search}
Subhrajit Bhattacharya, Maxim Likhachev, and Vijay Kumar.
\newblock Search-based path planning with homotopy class constraints in 3d.
\newblock In \emph{Proceedings of the Twenty-Sixth {AAAI} Conference on
  Artificial Intelligence (AAAI)}, Toronto, Canada, 2012.

\bibitem[Bhattacharya et~al.(2013)Bhattacharya, Lipsky, Ghrist, and
  Kumar]{bhattacharya2013invariants}
Subhrajit Bhattacharya, David Lipsky, Robert Ghrist, and Vijay Kumar.
\newblock Invariants for homology classes with application to optimal search
  and planning problem in robotics.
\newblock \emph{Ann. Math. Artif. Intell.}, 67\penalty0 (3-4):\penalty0
  251--281, 2013.

\bibitem[Bhattacharya et~al.(2015)Bhattacharya, Kim, Heidarsson, Sukhatme, and
  Kumar]{bhattacharya2015topological}
Subhrajit Bhattacharya, Soonkyum Kim, Hordur~Kristinn Heidarsson, Gaurav~S.
  Sukhatme, and Vijay Kumar.
\newblock A topological approach to using cables to separate and manipulate
  sets of objects.
\newblock \emph{Int. J. Robotics Res.}, 34\penalty0 (6):\penalty0 799--815,
  2015.

\bibitem[Boidot et~al.(2015)Boidot, Marzuoli, and Feron]{boidot2015optimal}
Emmanuel Boidot, Aude Marzuoli, and Eric Feron.
\newblock Optimal planning strategy for ambush avoidance.
\newblock In \emph{Proceedings of the Workshop on Artificial Intelligence for
  Transportation: Advice, Interactivity, and Actor Modeling}, Austin, TX, 2015.

\bibitem[Branicky et~al.(2008)Branicky, Knepper, and Kuffner]{branicky2008path}
Michael~S. Branicky, Ross~A. Knepper, and James~J. Kuffner.
\newblock Path and trajectory diversity: Theory and algorithms.
\newblock In \emph{Proceedings of the 2008 {IEEE} International Conference on
  Robotics and Automation ({ICRA})}, pages 1359--1364, Pasadena, CA, 2008.

\bibitem[Chan and Kirkpatrick(2014)]{chan2014multi}
David Yu~Cheng Chan and David~G. Kirkpatrick.
\newblock Multi-path algorithms for minimum-colour path problems with
  applications to approximating barrier resilience.
\newblock \emph{Theor. Comput. Sci.}, 553:\penalty0 74--90, 2014.

\bibitem[Chung et~al.(2011)Chung, Hollinger, and Isler]{chung2011search}
Timothy~H. Chung, Geoffrey~A. Hollinger, and Volkan Isler.
\newblock Search and pursuit-evasion in mobile robotics - {A} survey.
\newblock \emph{Auton. Robots}, 31\penalty0 (4):\penalty0 299--316, 2011.

\bibitem[{\'C}urkovi{\'c} and {\v{C}}ehuli{\'c}(2020)]{curkovic2020diversity}
Petar {\'C}urkovi{\'c} and Lovro {\v{C}}ehuli{\'c}.
\newblock Diversity maintenance for efficient robot path planning.
\newblock \emph{Applied Sciences}, 10\penalty0 (5):\penalty0 1721, 2020.

\bibitem[Edelsbrunner(1992)]{edelsbrunner1992weighted}
Herbert Edelsbrunner.
\newblock \emph{Weighted alpha shapes}.
\newblock University of Illinois at Urbana-Champaign, 1992.

\bibitem[Erickson and LaValle(2009)]{erickson2009survivability}
Lawrence~H. Erickson and Steven~M. LaValle.
\newblock Survivability: Measuring and ensuring path diversity.
\newblock In \emph{Proceedings of the 2009 {IEEE} International Conference on
  Robotics and Automation (ICRA)}, pages 2068--2073, Kobe, Japan, 2009.

\bibitem[Ghrist(2014)]{ghrist2014elementary}
Robert~W Ghrist.
\newblock \emph{Elementary applied topology}, volume~1.
\newblock Createspace Seattle, 2014.

\bibitem[Grigoriev and Slissenko(1998)]{grigorie1998polytime}
Dima Grigoriev and Anatol Slissenko.
\newblock Polytime algorithm for the shortest path in a homotopy class amidst
  semi-algebraic obstacles in the plane.
\newblock In \emph{Proceedings of the 1998 International Symposium on Symbolic
  and Algebraic Computation (ISSAC)}, pages 17--24, Rostock, Germany, 1998.

\bibitem[Hatcher(2002)]{hatcher2022algebraic}
Allen Hatcher.
\newblock \emph{Algebraic topology}.
\newblock Cambridge University Press, Cambridge, 2002.

\bibitem[Hershberger and Snoeyink(1994)]{hershberger1994computing}
John Hershberger and Jack Snoeyink.
\newblock Computing minimum length paths of a given homotopy class.
\newblock \emph{Comput. Geom.}, 4:\penalty0 63--97, 1994.

\bibitem[Kim et~al.(2012)Kim, Sreenath, Bhattacharya, and
  Kumar]{kim2012trajectory}
Soonkyum Kim, Koushil Sreenath, Subhrajit Bhattacharya, and Vijay Kumar.
\newblock Trajectory planning for systems with homotopy class constraints.
\newblock In \emph{Proceedings of the Latest Advances in Robot Kinematics
  (ARK)}, pages 83--90, Innsbruck, Austria, 2012.

\bibitem[Knepper(2011)]{knepper2011thesis}
Ross~A. Knepper.
\newblock {On the Fundamental Relationships Among Path Planning Alternatives}.
\newblock Technical Report CMU-RI-TR-11-19, Carnegie Mellon University, The
  Robotics Institute, Pittsburgh, PA, June 2011.

\bibitem[Knepper and Mason(2009)]{knepper2009path}
Ross~A. Knepper and Matthew~T. Mason.
\newblock Path diversity is only part of the problem.
\newblock In \emph{Proceedings of the 2009 {IEEE} International Conference on
  Robotics and Automation (ICRA)}, pages 3224--3229, Kobe, Japan, 2009.

\bibitem[Knepper et~al.(2012)Knepper, Srinivasa, and Mason]{knepper2012toward}
Ross~A. Knepper, Siddhartha~S. Srinivasa, and Matthew~T. Mason.
\newblock Toward a deeper understanding of motion alternatives via an
  equivalence relation on local paths.
\newblock \emph{Int. J. Robotics Res.}, 31\penalty0 (2):\penalty0 167--186,
  2012.

\bibitem[Liu and Belabbas(2019)]{liu2019homotopy}
Shenyu Liu and Mohamed~Ali Belabbas.
\newblock A homotopy method for motion planning.
\newblock \emph{arXiv preprint arXiv:1901.10094}, 2019.

\bibitem[Lyu et~al.(2016)Lyu, Chen, and Balkcom]{lyu2016k}
Yu{-}Han Lyu, Yining Chen, and Devin~J. Balkcom.
\newblock k-survivability: Diversity and survival of expendable robots.
\newblock \emph{{IEEE} Robotics Autom. Lett.}, 1\penalty0 (2):\penalty0
  1164--1171, 2016.

\bibitem[Mart{\'{\i}} et~al.(2013)Mart{\'{\i}}, Gallego, Duarte, and
  Pardo]{marti2013heuristics}
Rafael Mart{\'{\i}}, Micael Gallego, Abraham Duarte, and Eduardo~G. Pardo.
\newblock Heuristics and metaheuristics for the maximum diversity problem.
\newblock \emph{J. Heuristics}, 19\penalty0 (4):\penalty0 591--615, 2013.

\bibitem[McCarthy et~al.(2012)McCarthy, Bretl, and
  Hutchinson]{mccarthy2012proving}
Zoe McCarthy, Timothy Bretl, and Seth Hutchinson.
\newblock Proving path non-existence using sampling and alpha shapes.
\newblock In \emph{Proceedings of the {IEEE} International Conference on
  Robotics and Automation ({ICRA})}, pages 2563--2569, St. Paul, MN, 2012.

\bibitem[O'Rourke(1987)]{orourke1987art}
Joseph O'Rourke.
\newblock \emph{Art Gallery Theorems and Algorithms}.
\newblock Oxford University Press, Inc., USA, 1987.

\bibitem[Parker(2009)]{parker2009multiple}
Lynne~E. Parker.
\newblock Multiple mobile robot teams, path planning and motion coordination
  in.
\newblock In \emph{Encyclopedia of Complexity and Systems Science}, pages
  5783--5800. 2009.

\bibitem[Pecora et~al.(2018)Pecora, Andreasson, Mansouri, and
  Petkov]{pecora2018loosely}
Federico Pecora, Henrik Andreasson, Masoumeh Mansouri, and Vilian Petkov.
\newblock A loosely-coupled approach for multi-robot coordination, motion
  planning and control.
\newblock In \emph{Proceedings of the Twenty-Eighth International Conference on
  Automated Planning and Scheduling ({ICAPS})}, pages 485--493, Delft, The
  Netherlands, 2018.

\bibitem[Pereyra et~al.(2017)Pereyra, Aragu{\'{a}}s, and
  Kulich]{pereyra2017path}
Estefan{\'{\i}}a Pereyra, Gast{\'{o}}n Aragu{\'{a}}s, and Miroslav Kulich.
\newblock Path planning for a formation of mobile robots with split and merge.
\newblock In \emph{Proceedings of the 4th International Conference on Modelling
  and Simulation for Autonomous Systems (MESAS), Revised Selected Papers},
  pages 59--71, Rome, Italy, 2017.

\bibitem[Pokorny et~al.(2016)Pokorny, Hawasly, and
  Ramamoorthy]{pokorny2016topological}
Florian~T. Pokorny, Majd Hawasly, and Subramanian Ramamoorthy.
\newblock Topological trajectory classification with filtrations of simplicial
  complexes and persistent homology.
\newblock \emph{Int. J. Robotics Res.}, 35\penalty0 (1-3):\penalty0 204--223,
  2016.

\bibitem[Quispe et~al.(2013)Quispe, Kunz, and Stilman]{quispe2013generation}
Ana C.~Huam{\'{a}}n Quispe, Tobias Kunz, and Mike Stilman.
\newblock Generation of diverse paths in 3d environments.
\newblock In \emph{Proceedings of the 2013 {IEEE/RSJ} International Conference
  on Intelligent Robots and Systems (IROS)}, pages 5994--5999, Tokyo, Japan,
  2013.

\bibitem[Schwartz and Sharir(1983)]{schwartz1983piano}
Jacob~T Schwartz and Micha Sharir.
\newblock On the “piano movers” problem. {II}. general techniques for
  computing topological properties of real algebraic manifolds.
\newblock \emph{Advances in applied Mathematics}, 4\penalty0 (3):\penalty0
  298--351, 1983.

\bibitem[Sim{\'{e}}on et~al.(2004)Sim{\'{e}}on, Laumond, Cort{\'{e}}s, and
  Sahbani]{simeon2004manipulation}
Thierry Sim{\'{e}}on, Jean{-}Paul Laumond, Juan Cort{\'{e}}s, and Anis Sahbani.
\newblock Manipulation planning with probabilistic roadmaps.
\newblock \emph{Int. J. Robotics Res.}, 23\penalty0 (7-8):\penalty0 729--746,
  2004.

\bibitem[Sucan et~al.(2012)Sucan, Moll, and Kavraki]{sucan2012open}
Ioan~Alexandru Sucan, Mark Moll, and Lydia~E. Kavraki.
\newblock The open motion planning library.
\newblock \emph{{IEEE} Robotics Autom. Mag.}, 19\penalty0 (4):\penalty0 72--82,
  2012.

\bibitem[van Kreveld et~al.(2011)van Kreveld, van Lankveld, and
  Veltkamp]{kreveld2011shape}
Marc~J. van Kreveld, Thijs van Lankveld, and Remco~C. Veltkamp.
\newblock On the shape of a set of points and lines in the plane.
\newblock \emph{Comput. Graph. Forum}, 30\penalty0 (5):\penalty0 1553--1562,
  2011.

\bibitem[Voss et~al.(2015)Voss, Moll, and Kavraki]{voss2015heuristic}
Caleb Voss, Mark Moll, and Lydia~E. Kavraki.
\newblock A heuristic approach to finding diverse short paths.
\newblock In \emph{Proceedings of the {IEEE} International Conference on
  Robotics and Automation (ICRA)}, pages 4173--4179, Seattle, WA, 2015.

\bibitem[Xu et~al.(2006)Xu, Chen, Xiong, Qiao, and He]{xu2006complexity}
Dahai Xu, Yang Chen, Yizhi Xiong, Chunming Qiao, and Xin He.
\newblock On the complexity of and algorithms for finding the shortest path
  with a disjoint counterpart.
\newblock \emph{{IEEE/ACM} Trans. Netw.}, 14\penalty0 (1):\penalty0 147--158,
  2006.

\bibitem[Xu et~al.(1992)Xu, Mattikalli, and Khosla]{xu1992motion}
Yangsheng Xu, Raju Mattikalli, and Pradeep Khosla.
\newblock Motion planning using medial axis.
\newblock \emph{IFAC Proceedings Volumes}, 25\penalty0 (28):\penalty0 135--140,
  1992.

\bibitem[Yuan et~al.(2005)Yuan, Varma, and Jue]{yuan2005minimum}
Shengli Yuan, Saket Varma, and Jason~P. Jue.
\newblock Minimum-color path problems for reliability in mesh networks.
\newblock In \emph{Proceedings of the 24th Annual Joint Conference of the
  {IEEE} Computer and Communications Societies (INFOCOM)}, pages 2658--2669,
  Miami, FL, 2005.

\bibitem[Zabarankin et~al.(2006)Zabarankin, Uryasev, and
  Murphey]{zabarankin2006aircraft}
Michael Zabarankin, Stan Uryasev, and Robert Murphey.
\newblock Aircraft routing under the risk of detection.
\newblock \emph{Naval Research Logistics (NRL)}, 53\penalty0 (8):\penalty0
  728--747, 2006.

\bibitem[Zhu et~al.(2015)Zhu, Schmerling, and Pavone]{zhu2015convex}
Zhijie Zhu, Edward Schmerling, and Marco Pavone.
\newblock A convex optimization approach to smooth trajectories for motion
  planning with car-like robots.
\newblock In \emph{Proceedings of the 54th {IEEE} Conference on Decision and
  Control (CDC)}, pages 835--842, Osaka, Japan, 2015.

\end{thebibliography}

\end{document}